\documentclass{article}

\oddsidemargin 0pt
\evensidemargin 0pt
\marginparwidth 40pt
\marginparsep 10pt
\topmargin -20pt
\headsep 10pt
\textheight 8.7in
\textwidth 6.65in
\linespread{1.2}

% Recommended, but optional, packages for figures and better typesetting:
\usepackage{microtype}
\usepackage{graphicx}
\usepackage{subfigure}
\usepackage{booktabs} % for professional tables
\usepackage{xfrac}  % for sfrac (added by Louis)
\usepackage{hyperref}

%ICML
\usepackage{fancyhdr}
\usepackage{xcolor} % changed from color to xcolor (2021/11/24)
\usepackage{algorithm}
\usepackage{algorithmic}
\usepackage{natbib}
\usepackage{eso-pic} % used by \AddToShipoutPicture
\usepackage{forloop}
\usepackage{url}

\usepackage{settings}
\usepackage{authblk}

\author[1]{Alberto Gonz\'alez-Sanz}
\author[1]{Lucas De Lara}
\author[2]{Louis Béthune}
\author[1]{Jean-Michel Loubes}
\affil[1]{Institut de Mathématiques de Toulouse, Université Paul Sabatier}
\affil[2]{Institut de Recherche en Informatique de Toulouse, Université Paul Sabatier}

\title{GAN Estimation of Lipschitz Optimal Transport Maps}
\date{}

\begin{document}

\maketitle

\begin{abstract}
This paper introduces the first statistically consistent estimator of the optimal transport map between two probability distributions, based on neural networks. Building on theoretical and practical advances in the field of Lipschitz neural networks, we define a Lipschitz-constrained generative adversarial network penalized by the quadratic transportation cost. Then, we demonstrate that, under regularity assumptions, the obtained generator converges uniformly to the optimal transport map as the sample size increases to infinity. Furthermore, we show through a number of numerical experiments that the learnt mapping has promising performances. In contrast to previous work tackling either statistical guarantees or practicality, we provide an expressive and feasible estimator which paves way for optimal transport applications where the asymptotic behaviour must be certified.
\end{abstract}

%While most state-of-the-art approximations of optimal transport were either statistically consistent but computationally inefficient, or easily implementable but lacking theoretical guarantees, our estimator satisfies the best of both worlds.

\section{Introduction}
\label{intro}

An \emph{optimal transport map} is the fundamental object of Monge's seminal formulation of optimal transport \citep{monge1781memoire}. It transforms one distribution into another with minimal effort. Formally, given two probability distributions $P$ and $Q$ on $\Omega \subseteq \R^d$, an optimal transport map from $P$ to $Q$ is a solution to,
\begin{equation}\label{monge}
    \min_{T \in \mathcal{T}(P,Q)} \int_\Omega \norm{x - T(x)}^2 \mathrm{d}P(x),
\end{equation}
where $\mathcal{T}(P,Q)$ is the set of measurable maps $T : \Omega \to \Omega$ \emph{pushing forward} $P$ to $Q$, that is $Q(M) = P(T^{-1}(M))$ for every measurable set $M \subseteq \Omega$. This property, denoted by $T_\sharp P = Q$, means that if a random variable $X$ follows the distribution $P$ then its image $T(X)$ follows the distribution $Q$. According to Theorem 2.12 in \citep{villani2003topics}, originally demonstrated in \citep{cuesta1989notes,brenier1991polar}, when $P$ and $Q$ admit densities with respect to the Lebesgue measure and have finite second-order moments, then there exists a unique (up to $P$-negligible sets) solution to Problem~\eqref{monge}, which we denote by $T_0$.

Due to their transparent mathematical formulation and well-established theory, optimal transport maps became popular in many applications from statistics-related fields, where one aims at modeling shifts between distributions. This includes multivariate-quantile analysis \citep{beirlant2020center,hallin2021distribution}, signal analysis \citep{kolouri2017optimal}, domain adaptation \citep{courty2014domain,seguy2018large,redko2019optimal}, transfer learning \cite{gayraud2017optimal}, fairness in machine learning \citep{gordaliza2019obtaining,black2020fliptest}, and counterfactual reasoning \citep{de2021transport, berk2021improving}. However, in such practical frameworks, one typically does not have access to the true distributions $P$ and $Q$ but to independent samples $x_1,\ldots,x_n \sim P$ and $y_1,\ldots,y_n \sim Q$. This raises the question of constructing a tractable approximation of the solution $T_0$ on the basis of these empirical observations. The simplest way to compute an empirical optimal transport map from data points is to solve Problem~\eqref{monge} between the empirical measures $P_n := n^{-1} \sum^n_{i=1} \delta_{x_i}$ and $Q_n := n^{-1} \sum^n_{i=1} \delta_{y_i}$ instead of $P$ and $Q$. Implementing this solution suffers from three main drawbacks. The first one is the \emph{computational cost}, since it requires at least $O(n^3)$ operations to compute the empirical optimal transport map \citep{peyre2019computational}. The second is the \emph{memory cost}, since this map is typically stored as an $n \times n$ matrix. As a consequence of these two issues, this approach does not scale well with the size of the dataset. The third limitation of the empirical map is its \emph{inability to generalize} to new out-of-sample observations: by construction it is only matching the set $\{x_1, \ldots, x_n\}$ to $\{y_1,\ldots,y_n\}$.

These practical drawbacks triggered a vast literature on continuous approximations of optimal transport maps. The proposed mappings all come with different practical limitations, theoretical guarantees, and experimental performances. On the one hand, a wide range of these constructions provably converge in some sense to the true map $T_0$ as $n$ increases to infinity, making them consistent estimators. The so-called plug-in estimators, such as the ones proposed in \citep{beirlant2020center,hallin2021distribution,manole2021plugin}, extend the empirical solution to the whole domain $\Omega$ by leveraging regularity assumptions. However, they still bear the burdens of computing and storing the empirical transport map. The smooth estimator introduced by \citet{hutter2021minimax} reaches near-optimal minimax convergence rate, but fails to be computationally tractable. In contrast, \citet{seguy2018large} and \citet{pooladian2021entropic} employed entropic regularization, a numerical scheme based on Sinkhorn's algorithm \citep{cuturi2013sinkhorn}, to build an implementable and scalable estimator. On the other hand, several papers proposed learning the optimal transport map through neural networks, leading to expressive approximations with high generalization power. Specifically, \citet{leygonie2019adversarial} and \citet{black2020fliptest} developed approximations based on a generative-adversarial-network (GAN) objective \citep{goodfellow2014generative,arjovsky2017wasserstein}. More recently, the use of input convex neural networks, building on the convexity of the optimal transport potential, has received a growing attention \citep{makkuva2020optimal,korotin2021wasserstein,huang2021convex}. However, while these neural-based mappings display strong experimental performances, they generally lack theoretical guarantees, in particular the statistical convergence. %As such, they cannot be considered as proper estimators.

%\cite{black2020fliptest} developed a neural-network approximation based on a generative-adversarial-network (GAN) objective. Their construction benefits from the so-called \emph{inductive bias} of neural networks, leading to a map with strong generalization power, and experimentally more robust to sample-variability than state-of-the-art methods. However, their approximation lacks theoretical guarantees, in particular the statistical convergence.

To sum-up, the literature has mostly addressed either theoretically grounded statistical estimators of optimal transport maps, but unsuitable for large-scale implementations, or efficient heuristic approximations, at the cost of statistical guarantees. In this paper, we propose a novel GAN-based estimator $G_n$ of $T_0$ which, under some assumptions, converges uniformly:
\begin{equation*}
    \norm{G_n - T_0}_{\infty} \xrightarrow[n \to +\infty]{a.s.} 0.
\end{equation*}
Our construction takes root in the approximation from \citep{black2020fliptest}, defined as the generator of a penalized Wassertein-GAN (WGAN) training problem \citep{arjovsky2017wasserstein}, and improve it by assuming a setting where the optimal transport map is Lipschitz and by leveraging recent theoretical and practical advances on Lipschitz neural networks \citep{anil2019sorting,tanielian2021approximating,bethune2021faces}. Formally, $G_n$ solves the following adversarial training:
\begin{equation}\label{gan_problem}
    \inf_{G \in \G_n} \Big\{ \norm{I-G}^2_{L^2(P_n)} + \lambda_n \sup_{D \in \D_n} \int D\left(\mathrm{d}(G_\sharp P_n) - \mathrm{d}Q_n\right) \Big\},\nonumber
\end{equation}
where $\D_n$ is a class of 1-Lipschitz discriminators providing a proxy for the Wasserstein-1 distance, and $\G_n$ is a class of Lipschitz generators parametrizing the space of feasible mappings. The positive parameter $\lambda_n$ governs the trade-off between minimizing the quadratic transportation cost, promoting the objective of the Monge problem \eqref{monge}, and minimizing the distance between the generated and the target distributions, enforcing the push-forward constraint.

The most similar papers to ours are the ones of \citet{seguy2018large} and \citet{pooladian2021entropic}, as they propose feasible estimators with statistical guarantees. We note two main differences. First, we do not rely on entropic regularization while still ensuring scalability to large datasets. Second, our estimator innovates by being defined as a neural network. In particular, \citet{seguy2018large} relies on a neural network in practice, but the statistical convergence holds for a theoretical estimator. Regarding theoretical guarantees, we lack the convergences rates provided in \citep{pooladian2021entropic}, but we prove a stronger result than \citet{seguy2018large} by ensuring the uniform convergence of the estimator. % bridging the gap between theoretical design and practical implementation.%Defined as such, it enjoys the expressiveness and tractability of neural-network generators, without compromising on theoretical soundness.

\paragraph{Outline.} The rest of the paper is organized as follows:
\begin{enumerate}
    \item Section \ref{lnn} introduces the necessary background on so-called \emph{GroupSort} neural networks, which became the gold standard to parametrize Lipschiz feed-forward neural networks. By studying the multivariate setting, we provide generalizations of the main approximation theorem from \citep{tanielian2021approximating}.
    \item Section \ref{gan} presents the technical assumptions of our framework, in particular the regularity of the optimal transport map, details construction of our GAN estimator, and states the statistical consistency theorem.
    \item Section \ref{num} focuses on the practical implementation of the estimator, and study its performance through a number of numerical experiments.
\end{enumerate}

\begin{figure}
    \centering
    \includegraphics[scale=0.8]{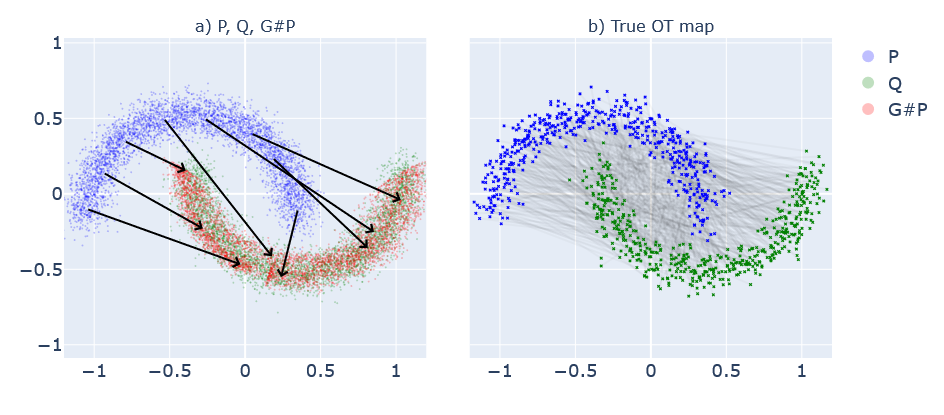}
    \caption{Estimation of the optimal transport map on the TwoMoons dataset. (a) GAN estimator $G$ after $800$ gradient steps on the generator, on the basis of $4,000$ points from each distribution. The black arrows represent the transport of specific points. (b) Empirical optimal transport map (discrete matching) between samples of size $500$.}
    \label{fig:failure_mode}
\end{figure}

%On the one hand, we have those which are statistically consistent, in the sense the estimator converges to the true $T$ as the sample size increases to infinity. , but also the wavelet-expansion estimator of \cite{hutter2021minimax}. However, they suffer from important drawbacks. In particular, they often need to compute and store the $n \times n$ matrix characterizing the empirical transport map, which requires $\O(n^3)$ computer operations and $\O(n^2)$ memory space, preventing their implementation on large dataset. Moreover, they are sometimes non-computable, as in \cite{hutter2021minimax}. On the other hand, we have heuristic approximations, lacking theoretical guarantees such as statistical convergence, but with promising empirical performances. In particular, the neural network approximation proposed by \cite{black2020fliptest}, based on the Wasserstein-GAN (WGAN) framework \cite{arjovsky2017wasserstein}, seems to be more accurate than the empirical optimal transport map, and more robust under changes in the samples. Neural networks notoriously benefit from the so-called \emph{inductive bias}: they learn latent patterns which give them a strong 

\paragraph{Notations.} The absolute value of real numbers and the Euclidean norm of vectors are respectively given by $\abs{\cdot}$ and $\norm{\cdot}$. The notation $B_r$ refers to the centered Euclidean ball of $\R^d$ with radius $r>0$. We denote by $\operatorname{diam}(\Omega)$ the diameter of a set $\Omega \subseteq \R^d$. If $\Omega$ is a closed convex set, then $\mathcal{P}_\Omega$ stands for the projection onto $\Omega$. The support of a probability measure is given by $\operatorname{supp}(\cdot)$. In the following $\Omega_1 \subseteq \R^{d_1}$ and $\Omega_2 \subseteq \R^{d_2}$ denote two arbitrary subsets. For a function $F : \Omega_1 \to \Omega_2$ and $\mu$ a probability measure on $\Omega_1$, we write $\norm{F}_{L^2(\mu)} := \sqrt{\int_{\Omega_1} \norm{F(x)}^2 \mathrm{d}\mu(x)}$. The supremum norm of function is given by $\norm{\cdot}_{\infty}. $For some $L>0$, we write $\operatorname{Lip}_L(\Omega_1,\Omega_2)$ the set of $L$-Lipschitz functions from $\Omega_1$ to $\Omega_2$. For some $\alpha>0$, we call $\C^\alpha(\Omega_1,\Omega_2)$ the set of $\alpha$-Hölder functions from $\Omega_1$ to $\Omega_2$ and write $\norm{\cdot}_{\alpha,\infty}$ for the $\alpha$-Hölder norm of functions. For a differentiable function $F : \Omega_1 \to \Omega_2$, we call $F'$ its derivative, where for any $x \in \Omega_1$ the quantity $F'(x)$ is a $d_1 \times d_2$ matrix. For a real symmetric matrix $S$ and a real number $\gamma$, the relation $\gamma \preceq S$ indicates that all the eigenvalues of $S$ are greater than $\gamma$. The relation $\succeq$ is defined similarly.

\section{Lipschitz neural networks}\label{lnn}

The GAN estimator defined by \eqref{gan_problem} and further described in Section \ref{gan} requires generators and discriminators that are both Lipschitz. The question of imposing sharp Lipschitz constraints on neural networks has attracted much attention from the field of machine learning, especially with the popularization of WGANs which rely on 1-Lipschitz discriminators. In particular, gradient penalization \citep{gulrajani2017improved} has proven to be more efficient than the parameter-clipping approach originally proposed by \citet{arjovsky2017wasserstein}. In this paper, we focus on the recently introduced \emph{GroupSort} activation function to impose the Lipschitz constraint, which have proven to yield tighter estimates of 1-Lispchitz functions \citep{anil2019sorting,tanielian2021approximating}. We recall the necessary background on GroupSort-based networks, and show that their ability to approximate any bounded classes of Lipschitz functions holds for arbitrary output dimension. 

\subsection{Multivariate GroupSort neural networks}

We introduce GroupSort neural networks in a similar fashion to \citep{tanielian2021approximating}. In contrast, we consider a more general setting where the output dimension $p \geq 1$ is arbitrary. This difference is motivated by the optimal transport map being a multivariate function.

We write $\sigma_k$ for the GroupSort activation function of grouping size $k \geq 2$. By definition, it splits the pre-activation input into groups of size $k$, and then sorts each group by decreasing order. This operation is 1-Lipschitz, gradient-norm preserving and homogeneous \citep{anil2019sorting}. In this paper, we only address the grouping size 2. We call a GroupSort feed-forward neural network (with grouping size 2) any function $N_\theta : \R^d \to \R^p$ of the form 
\begin{equation}\label{GS}
    N_\theta = h_l \circ h_{l-1} \circ \ldots \circ h_1,
\end{equation}
where
\begin{align*}
    &h_1(x) := W_1 x + b_1 \mbox{ with } W_1 \in \R^{w_1 \times d}, b_1 \in \R^{w_1};\\
    &h_2(x) := W_2 \sigma_2(x) + b_2 \mbox{ with } W_2 \in \R^{w_2 \times w_1}, b_2 \in \R^{w_2};\\
    &\ldots\\
    %&h_{l-1}(x) := W_{l-1} \sigma_k(x) + b_{l-1} \mbox{ with } W_{l_1} \in \R^{w_{l-1} \times w_{l-2}} \mbox{ and } b_{l-1} \in \R^{w_{l-1}};\\
    &h_l(x) := W_l \sigma_2(x) + b_l \mbox{ with } W_l \in \R^{p \times w_{l-1}}, b_l \in \R^p.\\
\end{align*}
The integer $l \geq 1$ denotes the \emph{depth} of the network while the integers $\{w_1,\ldots,w_{l-1}\}$ refer to the \emph{widths} of the hidden layers $\{h_1,\ldots,h_{l-1}\}$. The widths are assumed to be divisible by $2$ (the grouping size). Additionally, we define $s := \sum^{l-1}_{i = 1} w_i$ the \emph{size} of the network. The parameter $\theta := (W_1,\ldots,W_l,b_1,\ldots,b_l) \in \Theta$ represents the \emph{weights} matrices and \emph{offset} vectors of $N_\theta$.

For a matrix $W$, let $\norm{W}_{\infty} := \sup_{\norm{x}_{\infty}=1} \norm{W x}_{\infty}$ and $\norm{W}_{2,\infty} := \sup_{\norm{x}=1} \norm{W x}_{\infty}$, where $\norm{x}_{\infty}$ denotes the maximum norm of vectors. Consider the following assumption on the parameters:
\begin{description}
    \item[(C)\namedlabel{compact}{\textbf{(C)}}] There exists a constant $C>0$ such that for all $(W_1,\ldots,W_l,b_1,\ldots,b_l) \in \Theta$,
\begin{align*}
    \norm{W_1}_{2,\infty} &\leq 1,\\
    \max(\norm{W_2}_{\infty},\ldots,\norm{W_l}_{\infty}) &\leq 1,\\
    \max(\norm{b_1}_{\infty},\ldots,\norm{b_l}_{\infty}) &\leq C.
\end{align*}
\end{description}
In the following, we denote by $\mathcal{N}^p_C(l,s)$ the class of GroupSort feed-forward neural networks with depth $l$, size $s$, output dimension $p$, satisfying Assumption \ref{compact} for the constant $C>0$. When the depth and size are arbitrary, we simply write $\mathcal{N}^p_C$. The following result is a trivial extension to the multivariate case of Lemma~1 in \citep{tanielian2021approximating}, stating that GroupSort neural networks satisfying Assumption \ref{compact} are 1-Lipschitz.

\begin{lemma}\label{lips}
For any $C>0$, $\mathcal{N}^p_C \subset \operatorname{Lip}_1(\R^d,\R^p)$.
\end{lemma}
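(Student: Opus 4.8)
The plan is to show that each layer map is 1-Lipschitz with respect to the appropriate norms and that the composition therefore inherits the Lipschitz bound, being careful about which norm ($\ell^2$ or $\ell^\infty$) is used at each stage.

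First I would observe that the first layer $h_1(x) = W_1 x + b_1$ maps $(\R^d, \norm{\cdot})$ into $(\R^{w_1}, \norm{\cdot}_\infty)$ and, by definition of $\norm{W_1}_{2,\infty}$ and Assumption \ref{compact}, satisfies $\norm{h_1(x) - h_1(y)}_\infty = \norm{W_1(x-y)}_\infty \leq \norm{W_1}_{2,\infty}\,\norm{x-y} \leq \norm{x-y}$. Next I would recall that the GroupSort activation $\sigma_2$ is 1-Lipschitz for the $\ell^\infty$ norm: sorting each coordinate pair is a permutation of coordinates on each group, and such rearrangements do not increase the maximum-norm distance between two vectors (this is the property cited from \citep{anil2019sorting}). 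Then each intermediate layer $h_i(x) = W_i \sigma_2(x) + b_i$ for $2 \le i \le l$ is 1-Lipschitz from $(\R^{w_{i-1}}, \norm{\cdot}_\infty)$ to $(\R^{w_i}, \norm{\cdot}_\infty)$ because $\norm{W_i v}_\infty \leq \norm{W_i}_\infty \norm{v}_\infty \le \norm{v}_\infty$ by Assumption \ref{compact}, composed with the 1-Lipschitz $\sigma_2$.

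Having established that $h_1 : (\R^d,\norm{\cdot}) \to (\R^{w_1},\norm{\cdot}_\infty)$ is 1-Lipschitz and each $h_i : (\R^{w_{i-1}},\norm{\cdot}_\infty) \to (\R^{w_i},\norm{\cdot}_\infty)$ for $i \ge 2$ is 1-Lipschitz, I would conclude by composing: $N_\theta = h_l \circ \cdots \circ h_1$ is 1-Lipschitz from $(\R^d,\norm{\cdot})$ into $(\R^p,\norm{\cdot}_\infty)$. The final step is to pass from the $\ell^\infty$ bound on the output to the Euclidean one claimed in the statement. Here I would note that for a vector $z \in \R^p$ one has $\norm{z}_\infty \le \norm{z}$, so an $\ell^\infty$-Lipschitz bound on the output is in fact stronger than the Euclidean one — wait, that is the wrong direction. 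The correct route is that $h_l$ has output dimension $p$ and we should instead track it as a map into $(\R^p, \norm{\cdot})$: since $\norm{W_l v} \le \sqrt{p}\,\norm{W_l v}_\infty$ in general this would lose a factor, so the cleanest argument keeps the output in $\ell^\infty$ and then invokes $\norm{z} \le$ ... which again goes the wrong way. The genuinely clean fix, and the one I expect the authors use, is that $\operatorname{Lip}_1(\R^d,\R^p)$ in this paper is understood with the $\ell^\infty$ norm on the target throughout, or equivalently one absorbs the norm equivalence; in any case the only real content is the per-layer estimate above.

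The main obstacle is bookkeeping rather than depth: one must be scrupulous about the mismatch between the Euclidean norm (used for $W_1$ via $\norm{\cdot}_{2,\infty}$) and the maximum norm (used for $W_2,\ldots,W_l$ via $\norm{\cdot}_\infty$ and natural for $\sigma_2$), and ensure the chain of 1-Lipschitz maps is composed with matching norms at each interface. Since this is verbatim the multivariate analogue of Lemma 1 in \citep{tanielian2021approximating}, with the only change being that the output layer $h_l$ has width $p$ instead of $1$ — a modification that does not interact with the Lipschitz estimate at all — the proof is essentially immediate once the per-layer bounds are stated.
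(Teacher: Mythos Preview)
The paper gives no proof of this lemma: it simply declares it ``a trivial extension to the multivariate case of Lemma~1 in \citep{tanielian2021approximating}''. Your layer-by-layer argument---tracking $h_1$ as $1$-Lipschitz from $(\R^d,\norm{\cdot})$ to $(\R^{w_1},\norm{\cdot}_\infty)$, the GroupSort activation and the subsequent affine maps as $1$-Lipschitz in $\norm{\cdot}_\infty$, and then composing---is exactly the argument behind the $p=1$ result being cited, so you are doing precisely what the paper defers to.

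Your hesitation at the last step is well placed and is not mere bookkeeping. Under Assumption~\ref{compact} the chain genuinely terminates with a bound $\norm{N_\theta(x)-N_\theta(y)}_\infty \le \norm{x-y}$, and since $\norm{z} \le \sqrt{p}\,\norm{z}_\infty$ is the only inequality available in that direction, the stated hypotheses yield $\sqrt{p}$-Lipschitz for the Euclidean output norm, not $1$-Lipschitz. For $p=1$ the two norms coincide and there is nothing to check, which is why the scalar lemma goes through cleanly; for $p>1$ the ``trivial extension'' is only trivially true if $\operatorname{Lip}_1(\R^d,\R^p)$ is read with the $\ell^\infty$ norm on the target, as you correctly guessed. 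This ambiguity is the paper's, not yours, and it does not materially affect the downstream arguments (compactness via Arzel\`a--Ascoli and approximation of $T_0$ survive a constant factor in the Lipschitz bound).
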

Next, we study their ability to approximate Lipschitz continuous functions.

\subsection{Approximating Lipschitz continuous functions}

We now restrict the input domain to a \emph{compact} subset of $\R^d$ denoted by $\Omega$. The following lemma states that for a well-chosen $C$ the class $\mathcal{N}^1_C$ approximates with given precision any bounded subclass of $\operatorname{Lip}_1(\Omega,\R)$. It generalizes Theorem 2 in \citep{tanielian2021approximating} by providing the universal constant for which Assumption \ref{compact} is satisfied, and extending the result to any compact domain $\Omega$ while it was restricted to $[0,1]^d$.

\begin{theorem}\label{approx}
Let $\F \subseteq \operatorname{Lip}_1(\Omega,\R)$ be a class of functions such that $\sup_{f\in\mathcal{F}} \norm{f}_{\infty}\leq K_\F$ for some $K_\F > 0$. Set $\epsilon>0$ and $C := K_\F + \sqrt{d}(\sup_{x \in \Omega} \norm{x}+1) + \epsilon$. Then, for any $f \in \F$, there exists a neural network $N \in \mathcal{N}^1_C(l,s)$ where
\begin{equation*}
    l=O\left(d^2\log_2\left(\frac{2\sqrt{d}}{\epsilon}\right)\right)\ \text{and} \
    s=O\left(\left(\frac{2\sqrt{d}}{\epsilon}\right)^{d^2}\right),
\end{equation*}
such that $\norm{N - f}_{\infty} \leq \epsilon$.
\end{theorem}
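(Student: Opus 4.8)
The plan is to reduce the problem to the result of \citet{tanielian2021approximating}, which handles the case $\Omega = [0,1]^d$, via an affine change of variables, and then to track the Lipschitz and offset constants carefully through this reduction. First, since $\Omega$ is compact, it is contained in a cube $[-R,R]^d$ with $R := \sup_{x\in\Omega}\norm{x}$ (using $\norm{x}_\infty \leq \norm{x}$). Let $\phi : [0,1]^d \to [-R,R]^d$ be the affine bijection $\phi(u) = 2Ru - R\mathbf{1}$, so $\phi$ is $(2R)$-Lipschitz (in $\|\cdot\|_\infty$) and $\phi^{-1}$ is $(2R)^{-1}$-Lipschitz. Given $f \in \F$, extend $f$ to a $1$-Lipschitz function on $[-R,R]^d$ with the same sup-norm bound (e.g.\ by the McShane–Whitney extension $x \mapsto \inf_{y\in\Omega}(f(y)+\norm{x-y})$, truncated at $\pm K_\F$ if needed), and consider $g := f\circ\phi$ on $[0,1]^d$. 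Then $g$ is $(2R)$-Lipschitz with $\norm{g}_\infty \leq K_\F$; rescaling, $\tilde g := g/(2R)$ is $1$-Lipschitz on $[0,1]^d$.

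Next, apply the univariate-output approximation theorem of \citet{tanielian2021approximating} to $\tilde g$ on $[0,1]^d$: for precision $\epsilon' := \epsilon/(2R)$ there is a GroupSort network $\tilde N$ with depth $O(d^2\log_2(\sqrt d/\epsilon'))$ and size $O((\sqrt d/\epsilon')^{d^2})$ such that $\norm{\tilde N - \tilde g}_\infty \leq \epsilon'$. Then $N := 2R\cdot\tilde N\circ\phi^{-1}$ is again a GroupSort network (precomposing with the affine $\phi^{-1}$ folds into the first layer $W_1$, postmultiplying by $2R$ folds into the last layer $W_l$), it satisfies $\norm{N-f}_\infty \leq 2R\,\epsilon' = \epsilon$ on $\Omega$, and the depth/size orders are unchanged since $\log_2(\sqrt d/\epsilon') = \log_2(2R\sqrt d/\epsilon) = O(\log_2(2\sqrt d/\epsilon))$ up to the additive constant $\log_2 R$ absorbed by the $O(\cdot)$. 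The main obstacle — and the reason the theorem is more than a trivial corollary — is verifying Assumption~\ref{compact} for $N$ with the \emph{explicit} constant $C = K_\F + \sqrt d(R+1) + \epsilon$. One must show that the affine prepending and the scalar postmultiplication can be arranged so that $\norm{W_1}_{2,\infty}\leq 1$ and $\norm{W_j}_\infty \leq 1$ for $j\geq 2$ are preserved (this uses homogeneity of $\sigma_2$ to redistribute the scaling factor $2R$ across layers, or alternatively a direct construction of $N$ on $\Omega$ mimicking the proof in \citep{tanielian2021approximating}), and that all offsets stay bounded by $C$.

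Rather than going through the affine reduction, it may in fact be cleaner to re-run the construction of \citet{tanielian2021approximating} directly on $\Omega$: that proof builds $N$ by (i) approximating $f$ by a max-of-min of affine ``tent'' functions on a grid, and (ii) realizing max and min exactly with GroupSort layers (which is where $\sigma_2$ and the lattice-operation identities enter). Step (i) introduces affine pieces $x\mapsto \langle a, x\rangle + \beta$ with $\|a\|\leq 1$; bounding $|\langle a,x\rangle + \beta|$ on $\Omega$ by $\sup_{x\in\Omega}\norm{x} + \beta$ and controlling $\beta$ in terms of $\norm{f}_\infty$ and the grid geometry yields the claimed value of $C$, with the $\sqrt d$ coming from the conversion between $\|\cdot\|_2$ and $\|\cdot\|_\infty$ in the first-layer norm constraint and the $\epsilon$ slack absorbing the grid-discretization error. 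The depth and size bounds are then inherited verbatim from \citep{tanielian2021approximating}. The delicate bookkeeping is entirely in the constant $C$; the approximation rate itself requires no new idea.
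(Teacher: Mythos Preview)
Your reduction via an affine change of variables to $[0,1]^d$ is exactly the route the paper takes, and you correctly identify that the entire content of the theorem lies in the explicit constant $C$ (the depth and size bounds are inherited verbatim from \citet{tanielian2021approximating}). However, the proposal does not actually close that gap, and one of the concrete suggestions you make would fail.

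Specifically, writing $N = 2R\cdot \tilde N \circ \phi^{-1}$ and saying that ``postmultiplying by $2R$ folds into the last layer $W_l$'' is incorrect for the purposes of Assumption~\ref{compact}: this would replace $W_l$ by $2R\,W_l$, and nothing prevents $\|2R\,W_l\|_\infty > 1$. The homogeneity of $\sigma_2$ is indeed the right tool, but it does not ``redistribute $2R$ across layers'' in the weight matrices; rather, positive homogeneity gives $2R\,\tilde N(u/(2R)) = \hat N(u)$ where $\hat N$ has the \emph{same} weight matrices as $\tilde N$ and offsets multiplied by $2R$. The paper exploits this together with a structural fact about the Tanielian--Biau construction that you do not mention: in that construction, all offset vectors $b_2,\ldots,b_l$ are zero, and the entries of $b_1$ are exactly the constants $c_i$ appearing in the max--min representation $\max_s \min_{i\in I_s}(a_i\cdot x + c_i)$ of the approximating piecewise-affine function. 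The problem therefore reduces to bounding $\max_i |c_i|$, which the paper does via a short lemma: if $q$ is a $1$-Lipschitz piecewise-affine $\epsilon$-approximant of $f_R$ on $[0,1]^d$, then each $|c_i| \le \|f_R\|_\infty + \epsilon + \sqrt d$. After the translation by $x_R$ (which adds $W_1 x_R$ to $b_1$, bounded by $\|x_R\| = \sqrt d\,R$), one arrives at $C = K_\F + \sqrt d(R+1) + \epsilon$.

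Your alternative of ``re-running the construction directly on $\Omega$'' does not avoid the issue: the piecewise-affine approximation in \citet{tanielian2021approximating} is built on a grid over a cube, so you still need the Lipschitz extension to $[-R,R]^d$ and the rescaling, and the same offset-tracking argument. In short, the missing ingredients are (i) the observation that only $b_1$ is nonzero in the Tanielian--Biau network and carries the $c_i$'s, and (ii) the explicit bound on the $c_i$'s; without these, the value of $C$ remains unjustified.
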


The proof essentially follows that of \citet{tanielian2021approximating}. It generalizes some parts by tracking the bound on the offset vectors of the approximating network. Interestingly, Theorem~\ref{approx} can be extended to the case where the output is of dimension $p$.

\begin{theorem}\label{approx_d}
Let $\G \subseteq \operatorname{Lip}_1(\Omega,\R^p)$ be a class of functions such that $\sup_{g \in\mathcal{G}} \norm{g}_{\infty}\leq K_\G$ for some $K_\G>0$. Set $\varepsilon>0$ and $C = K_\G + \sqrt{d} (\sup_{x \in \Omega} \norm{x} +1) + \varepsilon$. Then, for any $g \in \G$ there exists a neural network $N \in \mathcal{N}^p_C(l,s)$ where
\[
    l=O\left(d^2\log_2\left(\frac{2\sqrt{d}\sqrt{p}}{\epsilon}\right)\right),\ \ \text{and}\ \ 
    s=O\left(p\left(\frac{2\sqrt{d}\sqrt{p}}{\epsilon}\right)^{d^2}\right),
\]
such that $\norm{N - g}_{\infty} \leq \varepsilon$.
\end{theorem}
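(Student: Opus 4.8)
The plan is to reduce the multivariate approximation problem to $p$ independent scalar approximation problems, each handled by Theorem~\ref{approx}, and then stack the resulting networks into a single GroupSort network. Given $g = (g_1,\ldots,g_p) \in \operatorname{Lip}_1(\Omega,\R^p)$, each coordinate function $g_j : \Omega \to \R$ is $1$-Lipschitz (the coordinate projection is $1$-Lipschitz for the Euclidean norm) and satisfies $\norm{g_j}_\infty \leq \norm{g}_\infty \leq K_\G$. Thus each $g_j$ lies in a bounded subclass of $\operatorname{Lip}_1(\Omega,\R)$ with the same sup-norm bound $K_\G$. The subtlety is the target precision: to obtain $\norm{N - g}_\infty \leq \varepsilon$ in the Euclidean norm on $\R^p$, it suffices to approximate each coordinate to precision $\varepsilon/\sqrt{p}$, since $\norm{N(x) - g(x)}^2 = \sum_{j=1}^p |N_j(x) - g_j(x)|^2 \leq p \cdot (\varepsilon/\sqrt{p})^2 = \varepsilon^2$. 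This explains why the $\sqrt{p}$ factor appears inside the logarithm and the size bound, and why the constant $C$ is stated with $\varepsilon$ rather than $\varepsilon/\sqrt{p}$ — one must check that the constant from Theorem~\ref{approx} applied with precision $\varepsilon/\sqrt{p}$, namely $K_\G + \sqrt{d}(\sup_{x\in\Omega}\norm{x}+1) + \varepsilon/\sqrt{p}$, is bounded above by the claimed $C = K_\G + \sqrt{d}(\sup_{x\in\Omega}\norm{x}+1)+\varepsilon$, which holds since $\varepsilon/\sqrt{p} \leq \varepsilon$ for $p \geq 1$.

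Next I would assemble the coordinate networks into one network. Apply Theorem~\ref{approx} to each $g_j$ with precision $\varepsilon/\sqrt{p}$, obtaining $N^{(j)} \in \mathcal{N}^1_{C_j}(l_j, s_j)$ with $l_j = O(d^2 \log_2(2\sqrt{d}\sqrt{p}/\varepsilon))$ and $s_j = O((2\sqrt{d}\sqrt{p}/\varepsilon)^{d^2})$, and with $C_j \leq C$. Without loss of generality pad all the networks to a common depth $l := \max_j l_j$ by inserting identity-like layers (a $1$-Lipschitz affine map composed with GroupSort can reproduce the identity on a coordinate up to the standard GroupSort trick of duplicating and negating entries), so each becomes an element of $\mathcal{N}^1_C(l, s_j)$ — here one must be slightly careful that padding does not violate Assumption~\ref{compact}, but identity layers have operator norm $1$ and zero offset, so they are harmless. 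Then form the block-diagonal network: at each layer stack the weight matrices $W^{(j)}_i$ block-diagonally and concatenate the offsets $b^{(j)}_i$. The GroupSort activation acts coordinatewise-in-groups, so running GroupSort on the concatenated pre-activation is exactly running each $\sigma_2$ on each block, provided the block sizes remain divisible by $2$ (pad each hidden width by at most one to ensure this, or note the construction in Theorem~\ref{approx} already yields even widths). The resulting network $N$ computes $x \mapsto (N^{(1)}(x), \ldots, N^{(p)}(x))$, has depth $l = O(d^2\log_2(2\sqrt{d}\sqrt{p}/\varepsilon))$ and size $s = \sum_j s_j = O(p (2\sqrt{d}\sqrt{p}/\varepsilon)^{d^2})$, and lies in $\mathcal{N}^p_C(l,s)$.

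Finally I would verify the two bookkeeping points: the operator-norm constraints of Assumption~\ref{compact} are preserved under block-diagonal stacking. For the first layer, $\norm{W_1}_{2,\infty} = \sup_{\norm{x}=1} \max_j \norm{W^{(j)}_1 x}_\infty \leq \max_j \norm{W^{(j)}_1}_{2,\infty} \leq 1$ since the blocks share the same input $x$ (the first-layer weights are stacked vertically, not block-diagonally). For the interior and final layers, the $\norm{\cdot}_\infty$ operator norm of a block-diagonal matrix is the max of the block norms, which is $\leq 1$; and the concatenated offsets have $\norm{\cdot}_\infty$ equal to the max of the block offset norms, which is $\leq C$. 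So $N \in \mathcal{N}^p_C(l,s)$ for the stated $C$. The main obstacle — and really the only nontrivial point beyond bookkeeping — is handling the depth mismatch cleanly: one needs a genuine identity gadget in the GroupSort architecture to pad shallower coordinate networks without disturbing their output or breaking the norm constraints, and to confirm this gadget (duplicate-negate-sort then recombine, or simply $\sigma_2$ applied to already-sorted pairs) keeps widths even and operator norms at most $1$. Everything else is the error estimate $\norm{N - g}_\infty \leq \sqrt{p}\cdot(\varepsilon/\sqrt{p}) = \varepsilon$ and propagating the complexity bounds through the $\max$ and the sum.
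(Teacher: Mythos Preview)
Your proposal is correct and follows exactly the paper's approach: apply Theorem~\ref{approx} coordinate-wise and stack the resulting scalar networks into a single $\R^p$-valued GroupSort network. You are in fact more careful than the paper's own proof, which applies Theorem~\ref{approx} at precision $\varepsilon$ (so its final bound reads $\sqrt{p}\,\varepsilon$ rather than $\varepsilon$, leaving the substitution $\varepsilon \to \varepsilon/\sqrt{p}$ implicit) and asserts the preservation of Assumption~\ref{compact} under concatenation without the layer-by-layer norm check or the depth-padding discussion you provide.
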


The proof amounts to applying Theorem~\ref{approx} to the univariate function along each dimension. Note that Theorems \ref{approx} and \ref{approx_d} can be extended to approximate $L$-Lipschitz functions, for an arbitrary $L>0$, by multiplicating by $L$ the output later of 1-Lipschitz neural networks. This remark will be useful to approximate the optimal transport map, assumed to be $L$-Lipschitz.

\section{GAN estimator}\label{gan}

In this section, we address the construction of an estimator of the optimal transport map, and show its uniform convergence as the sample size increases to infinity.

\subsection{Optimal transport setup}

Set $P$ and $Q$ two measures on $\R^d$ admitting densities with respect to the Lebesgue measure and with finite second-order moments. We aim at estimating with a GroupSort neural network the unique optimal transport map $T_0$ between $P$ and $Q$ through the knowledge of the empirical distributions $P_n$ and $Q_n$. As mentioned in the introduction, we consider a setting where the optimal transport map $T_0$ is Lipschitz.

As in the previous section, $\Omega \subset \R$ is a compact set, and we denote by $\Omega_P := \operatorname{supp}(P)$ the \emph{source} domain and $\Omega_Q := \operatorname{supp}(Q)$ the \emph{target} domain. Then, we let $L \geq 2$ and make the following assumptions:

\begin{description}
    \item[(S1)\namedlabel{S-source}{\textbf{(S1)}}] The source domain $\Omega_P \subseteq B_L$ is a bounded and connected Lipschitz domain. The measure $P$ admits a density $\rho$ with respect to the Lebesgue measure such that $L^{-1} \leq \rho(x) \leq L$ for almost every $x \in \Omega_P$.
    \item[(S2)\namedlabel{S-map}{\textbf{(S2)}}] Let $\Tilde{\Omega}_P$ denote a convex set with Lipschitz boundary such that $\Omega_P + B_{L^{-1}} \subseteq \Tilde{\Omega}_P \subseteq B_L$. The optimal transport map $T_0$ is a differentiable function from $\Tilde{\Omega}_P$ to $\R^d$ such that $T_0 = \nabla f_0$ where $f_0 : \Tilde{\Omega}_P \to \R^d$ is a differentiable convex function. Additionally it satisfies:
    \begin{itemize}
        \item[(i)] $T_0 \in C^{2}(\Tilde{\Omega}_P,\R^d)$ such that $\norm{T_0}_{2,\infty} \leq L$;
        \item[(ii)]$L^{-1} \preceq T'_0(x) \preceq L$ for all $x \in \Tilde{\Omega}_P$.
    \end{itemize}
\end{description}

These are the same hypothesis as in Section 5 from \citep{hutter2021minimax}, specified with a Hölder regularity $\alpha$ equals to 2. This makes our setting milder, as we do not require the optimal transport map to be highly regular. Assumptions \ref{S-source} and \ref{S-map} ensure the existence of a near-optimal minimax estimator of $T_0$, which play a key role in the proof of our estimator's consistency. Note that, without loss of generality, we can consider that $P$ and $Q$ are measures on a compact set $\Omega \subset \R^d$ sufficiently large to contain $B_L$. Then, Assumption \ref{S-map} implies that $T_0 \in \operatorname{Lip}_L(\Omega,B_L)$.

Now that the optimal transport problem is properly specified, we turn to the GAN architecture through which our estimator is defined.

%Remark that promoting the push-forward condition ${G_n}_\sharp P \approx Q$ shares similarities with generative modeling from the GAN literature. Typically, one builds a neural network generating $Q$ from $P$ by minimizing a metric between samples of the distributions $G_\sharp P$ and $Q$. For instance, a Wasserstein-GAN (WGAN) \citep{arjovsky2017wasserstein} leverages a proxy for the Wasserstein-1 distance. Following \citep{black2020fliptest}, we rely on this proxy to handle the push-forward condition.  

\subsection{GAN setup}

The optimal transport map $T_0$ satisfies two objectives: it is constrained to pushing-forward $P$ to $Q$, that is ${T_0}_\sharp P = Q$; it minimizes the quadratic transportation cost $\norm{I-T_0}^2_{L^2(P)}$. Due to the push-forward condition, $T_0$ can be regarded as a generative model. This observation is the foundation of the approximation of \citet{black2020fliptest}. They proposed to regularize the WGAN objective function, promoting only the push-forward condition, with an optimal transport penalty on the generator. We proceed similarly, with three critical differences. First, we penalize the quadratic transportation cost with the push-forward condition instead of the converse. Second, we employ GroupSort neural networks to implement the discriminator and generator. Third, because we aim at proving the statistical convergence of the generator, we emphasize for all the objects involved in the GAN their dependence to the sample size $n$, including the penalty weight. %We detail and justify these differences next.

%In the following, we detail separately the construction of the discriminator and the generator involved in our GAN framework.

%To be relevant, an approximation $G_n$ of $T$ should render the transportation cost $\norm{I-G_n}^2_{L^2(P)}$ as small as possible while satisfying ${G_n}_\sharp P \approx Q$. \cite{black2020fliptest} addressed these two conditions by regularizing the WGAN objective function, promoting only the push-forward condition, with an optimal transport penalty. 

\subsubsection{Discriminator}

In the WGAN framework, the \emph{discriminator} $D : \R^d \to \R$ is a neural network defining a proxy for the Wasserstein-1 distance, while the \emph{generator} $G : \R^d \to \R^d$ is a neural network minimizing this proxy between ${G}_\sharp P_n$ and $Q_n$, thereby aiming at generating $Q$ from $P$.

We recall that the Wasserstein-1 distance between two measures $\mu$ and $\nu$ on $\Omega$ is defined as,
\begin{equation*}%\label{primal}
    \W(\mu,\nu) := \inf_{\pi \in \Pi(\mu,\nu)} \int_{\Omega \times \Omega} \norm{x-y} \mathrm{d}\pi(x,y),
\end{equation*}
where $\Pi(\mu,\nu)$ is the set of couplings with $\mu$ as first marginal and $\nu$ as second marginal. Interestingly, this distance enjoys the following dual formulation, known as the Kantorovich-Rubinstein formula \citep{kantorovich1958space}. According to the Particular Case~5.15 of Theorem~5.9 in \citep{villani2008optimal}, this can be written as:
\begin{equation}\label{dual}
    \W(\mu,\nu) = \sup_{f \in \operatorname{Lip}_1(\Omega,\R)} \int f (\mathrm{d}\mu - \mathrm{d}\nu). 
\end{equation}
The key idea of WGAN is to approximate this distance by computing the supremum over a class of neural networks included in $\operatorname{Lip}_1(\Omega,\R)$. The larger the class, the better the approximation. Originally, this was done by clipping, thresholding the weights of the network, leading to a coarse approximation of the Wasserstein distance. Later, several papers showed that using GroupSort neural networks led to sharper approximations \citep{anil2019sorting,biau2021some}.

Actually, note that if $f$ is an optimal function in Problem~\eqref{dual}, then the function $f+c$ for any constant $c$ is also an optimal solution. As a consequence, we can without loss of generality restrict the set of feasible potentials to 1-Lipschitz functions taking the value zero at a given arbitrary anchor point $x_0 \in \Omega$. Formally, let's define
\begin{equation}\label{F}
    \F := \{f \in \operatorname{Lip}_1(\Omega,\R) \mid f(x_0)=0\}.
\end{equation}
Then we can write,
$$
    \W(\mu,\nu) = \sup_{f \in \F} \int f (\mathrm{d}\mu - \mathrm{d}\nu). 
$$
The interest of this formulation is that the feasible potentials now belongs to a bounded subclass of Lipschitz functions.
\begin{lemma}\label{lm:F}
Let $\F$ be defined as in Equation~\eqref{F}. Then,
\[
K_\F := \sup_{f \in \F} \norm{f}_{\infty} \leq \operatorname{diam}(\Omega).
\]
\end{lemma}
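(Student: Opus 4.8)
The plan is to exploit the two defining properties of the class $\F$: each $f \in \F$ is $1$-Lipschitz on $\Omega$ and vanishes at the anchor point $x_0 \in \Omega$. Fixing an arbitrary $f \in \F$ and an arbitrary $x \in \Omega$, I would write $f(x) = f(x) - f(x_0)$ using $f(x_0) = 0$, and then bound $\abs{f(x)} = \abs{f(x) - f(x_0)} \leq \norm{x - x_0}$ by the $1$-Lipschitz property.

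Next I would observe that $\norm{x - x_0} \leq \operatorname{diam}(\Omega)$ because both $x$ and $x_0$ lie in $\Omega$ and the diameter is by definition the supremum of pairwise distances between points of $\Omega$. Since $x \in \Omega$ was arbitrary, this yields $\norm{f}_{\infty} = \sup_{x \in \Omega} \abs{f(x)} \leq \operatorname{diam}(\Omega)$, and since $f \in \F$ was arbitrary, taking the supremum over $f$ gives $K_\F \leq \operatorname{diam}(\Omega)$.

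There is no real obstacle here; the only minor point worth a sentence is that $\operatorname{diam}(\Omega)$ is finite, which holds because $\Omega$ is assumed compact (hence bounded), so the bound is meaningful. This lemma is essentially an immediate consequence of the definitions, and its role is simply to record that $\F$ is a uniformly bounded subclass of $\operatorname{Lip}_1(\Omega,\R)$ so that the approximation results of Section~\ref{lnn}, in particular Theorem~\ref{approx}, can be applied to it with the explicit constant $C$ depending on $K_\F$.
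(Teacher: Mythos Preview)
Your proof is correct and is exactly the natural argument; the paper itself does not even spell out a proof, treating the lemma as immediate from the definition of $\F$ and the compactness of $\Omega$. There is nothing to add.
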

Thus, Theorem~\ref{approx} entails that they can be approximated by GroupSort neural networks with specific depth and size. Following this remark, we define for each sample size $n$ the class of feasible discriminators $\D_n$ as well-chosen GroupSort neural networks. Specifically, the discriminators are defined as in the next assumption.

\begin{description}
    \item[(G1)\namedlabel{discriminator}{\textbf{(G1)}}] Set a sequence of positive numbers $\{\epsilon_n\}_{n \in \N}$ such that $\lim_{n \to +\infty} \epsilon_n = 0$, and a sequence of constants $\{C_n\}_{n \in \N}$ defined as
    $$
    C_n := \operatorname{diam}(\Omega) + \sqrt{d}(\sup_{x \in \Omega} \norm{x}+1) + \epsilon_n.
    $$
    For every $n \in \N$, define $\D_n := \mathcal{N}^1_{C_n}(l_n,s_n)$ where,
    \[
    l_n=O\left(d^2\log_2\left(\frac{2 \sqrt{d}}{\epsilon_n}\right)\right),\ \ 
    \text{and}\ \
    s_n=O\left( \left(\frac{2 \sqrt{d}}{\epsilon_n}\right)^{d^2}\right).
    \]
\end{description}

Then, we approximate the Wasserstein-1 distance through the following integral probability metric:

\begin{equation}\label{IPM}
     \W_n(\mu,\nu) := \sup_{D \in \D_n} \int D (\mathrm{d}\mu - \mathrm{d}\nu).
\end{equation}

An important consequence of Assumption \ref{discriminator} through Lemma~\ref{lm:F} and Theorem~\ref{approx} is that $\bigcup_{n \in \N} \D_n$ is dense in $\F$, rendering $\W_n$ asymptotically close to $\W$ as $n$ increases to infinity. Note that the sequence $\{\epsilon_n\}_{n \in \N}$ characterizes the rate at which the class $\D_n$ approximates $\F$. Now that we have properly defined the discriminators, we focus on the generators.  

\subsubsection{Generator}

\begin{figure*}[tb]
    \centering
    \includegraphics[scale=0.4]{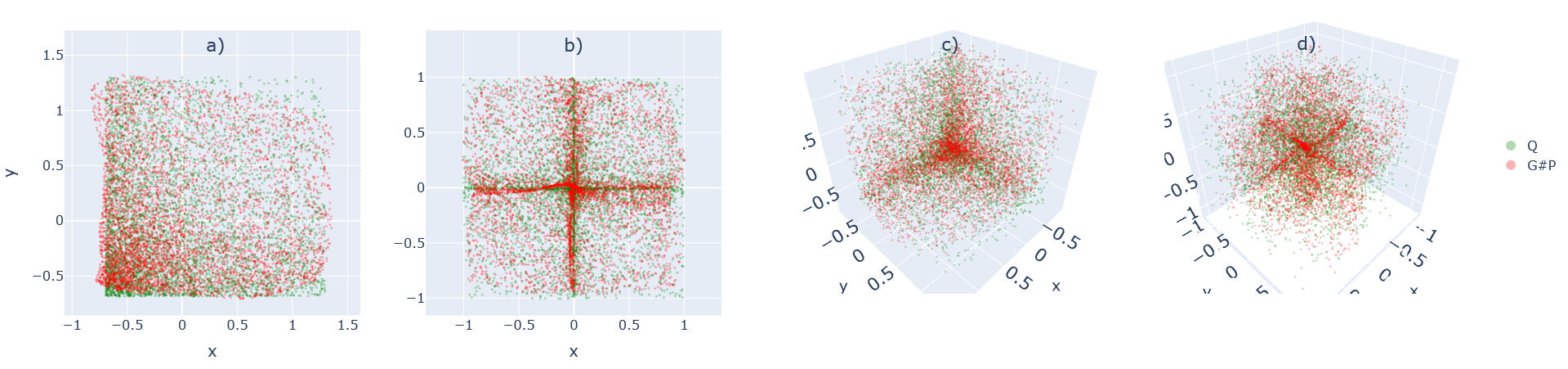}
    \caption{Visualisation of $G_\sharp P$ and $Q := {T_0}_\sharp P$ with $10,000$ points. $P$ is the uniform distribution on $[-1,1]^d$. The generator is trained for $120$ gradient steps. The Figures (a)-(b) corresponds to $d=2$. The Figures (c)-(d) corresponds to $d=3$. In Figures (a)-(c), we defined $T_0$ by coordinate-wise application of $x \mapsto \frac{1}{1.18}(\exp{x}-1.18)$. In Figures (b)-(d), we defined $T_0$ by coordinate-wise application of $x \mapsto x^2\text{sign}(x)$.}
    %\caption{Map $T_0(x)=(\begin{smallmatrix}1.25 & -0.625\\ -0.625 & 0.375\end{smallmatrix})x$}
    \label{fig:exp2d}
\end{figure*}

On the contrary to a standard WGAN, the generator must additionally minimize the quadratic transportation cost in order to approach the optimal transport map $T_0$. Let us denote by $\G_n$ the class of feasible generators, which will be specified later. A naive formulation for our estimator $G_n \in \G_n$ would be,
\begin{equation*}
    G_n \in \argmin_{G \in \G_n \text{ s.t. } {G}_\sharp P_n = Q_n} \norm{I-G}^2_{L^2(P_n)}.
\end{equation*}
However, since the push-forward condition is intractable as such, we replace it by a penalty term based on the neural proxy of the Wasserstein-1 distance. Formally, we set $\lambda_n > 0$ a regularization weight and we define the GAN estimator $G_n$ as an optimal solution to Problem~\eqref{gan_problem}, that is
\begin{equation*}%\label{estimator}
    G_n \in \argmin_{G \in \G_n} \L_n(G),
\end{equation*}
where
\begin{equation*}%\label{loss}
    \L_n(G) := \norm{I-G}^2_{L^2(P_n)} + \lambda_n \W_n(G_\sharp P_n,Q_n).
\end{equation*}
We note that Problem \eqref{gan_problem} is well-posed under mild conditions.
\begin{proposition}\label{existence}
If $\D_n \subseteq \operatorname{Lip}_1(\Omega,\R)$ and $\G_n$ is compact, then Problem \eqref{gan_problem} admits solutions.
\end{proposition}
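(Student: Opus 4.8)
The plan is to show that $\L_n$ is lower semi-continuous on the compact set $\G_n$, and then invoke the Weierstrass extreme value theorem to conclude that the infimum in Problem~\eqref{gan_problem} is attained. Since $\G_n$ is assumed compact (in a topology to be specified — naturally the topology of uniform convergence on the compact domain $\Omega$, which coincides with the parameter-space topology for a fixed architecture), it suffices to check continuity of each of the two terms of $\L_n$ with respect to this topology.

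First I would treat the transportation-cost term $G \mapsto \norm{I-G}^2_{L^2(P_n)} = n^{-1}\sum_{i=1}^n \norm{x_i - G(x_i)}^2$. This is a finite sum of continuous functions of the point evaluations $G(x_i)$, hence is continuous (indeed even Lipschitz on bounded sets) with respect to uniform convergence of $G$ on $\Omega$. No compactness is needed here; plain continuity is immediate.

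The second step, and the one requiring a little care, is the penalty term $G \mapsto \W_n(G_\sharp P_n, Q_n) = \sup_{D \in \D_n} \big( n^{-1}\sum_{i=1}^n D(G(x_i)) - n^{-1}\sum_{j=1}^n D(y_j) \big)$. Fix $D \in \D_n \subseteq \operatorname{Lip}_1(\Omega,\R)$; then $G \mapsto n^{-1}\sum_i D(G(x_i)) - n^{-1}\sum_j D(y_j)$ is continuous in $G$ because $D$ is (1-Lipschitz, hence) continuous and we again only use the finitely many evaluations $G(x_i)$. Thus $\W_n(G_\sharp P_n, Q_n)$ is a supremum of a family of continuous functions of $G$, hence lower semi-continuous in $G$. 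Combined with the first step, $\L_n$ is lower semi-continuous on $\G_n$. Since $\G_n$ is compact and nonempty, $\L_n$ attains its minimum, which proves the proposition. (If one wants genuine continuity rather than mere lower semi-continuity of the penalty, one can further note that the $1$-Lipschitz bound gives, for any $G, \tilde G$, the estimate $\big|\W_n(G_\sharp P_n,Q_n) - \W_n(\tilde G_\sharp P_n,Q_n)\big| \le n^{-1}\sum_i \norm{G(x_i) - \tilde G(x_i)} \le \norm{G-\tilde G}_\infty$, uniformly over $\D_n$; but lower semi-continuity already suffices for existence of minimizers.)

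The only genuine subtlety — and hence the ``main obstacle'' — is bookkeeping about \emph{which} topology makes $\G_n$ compact and checking that the two terms are continuous (or l.s.c.) in precisely that topology; once that is pinned down as uniform convergence on $\Omega$ (equivalently, since the architecture is fixed, compactness of the weight set $\Theta$ together with continuity of the realization map $\theta \mapsto N_\theta$), everything reduces to the elementary facts that finite sums of continuous functions are continuous and suprema of continuous functions are lower semi-continuous. I would also remark in passing that $\D_n \subseteq \operatorname{Lip}_1(\Omega,\R)$ is automatically satisfied under Assumption~\ref{discriminator} by Lemma~\ref{lips}, so the hypothesis of the proposition is consistent with the construction, and that the penalty term is moreover bounded (by $\operatorname{diam}(\Omega)$ via Lemma~\ref{lm:F}-type reasoning, or simply by $\sup_{D\in\D_n}2\norm{D}_\infty$), so there is no issue with the objective being $+\infty$.
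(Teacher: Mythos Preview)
Your proof is correct and essentially matches the paper's argument. The paper proceeds directly to full continuity of the penalty term by proving the same Lipschitz estimate $\big|\W_n({T_1}_\sharp P_n,Q_n)-\W_n({T_2}_\sharp P_n,Q_n)\big|\le \norm{T_1-T_2}_\infty$ that you give in your parenthetical, whereas you first observe that lower semi-continuity (supremum of continuous functions) already suffices; both routes lead to the same extreme-value conclusion on the compact $\G_n$.
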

This result is a direct consequence of the Lipschitz continuity of the loss function $\L_n$, which we demonstrate in the proof.

At this stage, we should make further assumptions on $\G_n$ to exploit the smoothness of the optimal transport problem. Let us define
\begin{equation}\label{G}
    \G := \operatorname{Lip}_L(\Omega,B_L),
\end{equation}
which is a class of bounded Lipschitz functions.
\begin{lemma}
    Let $\G$ be defined as in Equation~\eqref{G}. Then,
    \[
    K_\G := \sup_{g \in \G} \norm{g}_{\infty} \leq L \operatorname{diam}(\Omega) + \sup_{x \in \Omega} \norm{x}.
    \]
\end{lemma}
Critically, under Assumption \ref{S-map}, the solution $T_0$ belongs to $\G$, and as such can be approximated by GroupSort neural networks according to Theorem \ref{approx_d}. This motivates the following conditions on the set of feasible generators $\G_n$:
\begin{description}
    \item[(G2)\namedlabel{generator}{\textbf{(G2)}}] Set $\{\varepsilon_n\}_{n \in \N}$ a sequence of positive numbers such that $\lim_{n \to +\infty} \varepsilon_n = 0$, and a sequence of constants $\{C_n\}_{n \in \N}$ defined as
$$
C_n := L \operatorname{diam}(\Omega) + (\sqrt{d}+1)\sup_{x \in \Omega} \norm{x} + \sqrt{d} + \varepsilon_n.
$$
For every $n \in \N$, we define $\G_n$ as
$$
    \{ x \in \Omega \mapsto \mathcal{P}_{B_L}(L \times N(x)), N \in \mathcal{N}^d_{C_n}(l_n,s_n)\}
$$
where,
\[
    l_n=O\left(d\log_2\left(\frac{2d}{\varepsilon_n}\right)\right),\ \ 
\text{and}\ \ 
    s_n=O\left(d \left(\frac{2 d}{\varepsilon_n}\right)^{d^2}\right).
\]
\end{description}
Defined as such, $\G_n$ is included in $\G$. The idea behind Assumption \ref{generator} is similar to that of Assumption \ref{discriminator}. In particular, the condition on the depth and size of the networks guarantees through Theorem~\ref{approx_d} that $\G_n$ asymptotically fills $\G$ at speed $\varepsilon_n$, allowing to recover $T_0$ at the limit.

\subsection{Main theorem}

The convergence of $\{G_n\}_{n \in \N}$ towards $T_0$ revolves around two antagonistic conditions. Instinctively, the sequence of regularization weights $\{\lambda_n\}_{n \in \N}$ must tend to infinity in order to impose the push-forward condition at the limit. Concurrently, the sequence of feasible generators $\{\G_n\}_{n \in \N}$ must fill $\G$ sufficiently fast. This corresponds to the following assumptions:
\begin{description}
    \item[(G3)\namedlabel{lambda}{\textbf{(G3)}}] The sequence $\{\lambda_n\}$ is such that $\lim_{n \to +\infty} \lambda_n = +\infty$ and
    $$
        \lambda_n = \begin{cases} o\left(n^{\frac{1}{d}}\right) & \mbox{if } d > 2,\\
        o\left(n^\frac{1}{2}/ \log n\right) & \mbox{if } d=2,\\
        o\left(n^\frac{1}{2}/ \sqrt{\log n}\right) & \mbox{if } d=1.\\
        \end{cases}
    $$
    \item[(G4)\namedlabel{epsilon}{\textbf{(G4)}}] The sequence $\{\varepsilon_n\}_{n \in \N}$ from Assumption \ref{generator} is such that, $\varepsilon_n = o\left(\frac{1}{\lambda_n}\right)$.
\end{description}
We are now ready to state our main theorem.
\begin{theorem}\label{main}
Let $P$ and $Q$ be such that the smoothness assumptions \ref{S-source} and \ref{S-map} on the optimal transport problem hold, and denote by $T_0$ the (almost everywhere) unique optimal transport map between $P$ and $Q$. Suppose that the GAN problem satisfies Assumptions \ref{discriminator}, \ref{generator}, \ref{lambda} and \ref{epsilon}. Then, for $G_n$ defined as a solution to Problem~\eqref{gan_problem} we have
$$
    \norm{G_n-T_0}_{\infty} \xrightarrow[n \to +\infty]{a.s.} 0.
$$
\end{theorem}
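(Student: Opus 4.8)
The plan is to decompose the error $\norm{G_n - T_0}_\infty$ by first controlling the loss values $\L_n(G_n)$ from above and below, then using the strong convexity encoded in Assumption \ref{S-map}(ii) to convert a bound on the objective into a bound on the distance to $T_0$. The starting point is that $T_0 \in \G$, so by Assumption \ref{generator} and Theorem~\ref{approx_d} there is a generator $\widetilde{G}_n \in \G_n$ with $\norm{\widetilde{G}_n - T_0}_\infty \leq \varepsilon_n$. Since $G_n$ is a minimizer, $\L_n(G_n) \leq \L_n(\widetilde{G}_n)$, and the right-hand side is $\norm{I - \widetilde{G}_n}_{L^2(P_n)}^2 + \lambda_n \W_n((\widetilde{G}_n)_\sharp P_n, Q_n)$. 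The quadratic term is close to $\norm{I-T_0}_{L^2(P_n)}^2$ up to $O(\varepsilon_n)$; for the penalty term, one writes $\W_n((\widetilde{G}_n)_\sharp P_n, Q_n) \leq \W_n((\widetilde{G}_n)_\sharp P_n, (T_0)_\sharp P_n) + \W_n((T_0)_\sharp P_n, Q_n)$, bounds the first summand by $\varepsilon_n$ (moving mass by at most $\varepsilon_n$), and bounds the second by $\W((T_0)_\sharp P_n, Q_n) \le \W((T_0)_\sharp P_n, (T_0)_\sharp P) + \W(Q, Q_n)$, which by the Lipschitz property of $T_0$ and the standard empirical-Wasserstein rates (see Fournier--Guillin / the rates underpinning Assumption \ref{lambda}) is $O_{a.s.}(r_n)$ for the appropriate $r_n$ (namely $n^{-1/d}$, $n^{-1/2}\log n$, or $n^{-1/2}\sqrt{\log n}$). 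Multiplying by $\lambda_n$ and invoking Assumptions \ref{lambda} and \ref{epsilon}, the whole penalty contribution to the upper bound is $o_{a.s.}(1)$, so $\L_n(G_n) \leq \norm{I-T_0}_{L^2(P_n)}^2 + o_{a.s.}(1)$.

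For the lower bound, since $G_n \in \G_n \subseteq \G = \operatorname{Lip}_L(\Omega, B_L)$, the sequence $\{G_n\}$ is equi-Lipschitz and uniformly bounded, so by Arzelà--Ascoli every subsequence has a further subsequence converging uniformly on $\Omega$ to some $G_\infty \in \G$. The key is to show $G_\infty = T_0$. First, the penalty being nonnegative gives $\norm{I - G_n}_{L^2(P_n)}^2 \le \L_n(G_n) \le \norm{I-T_0}_{L^2(P_n)}^2 + o_{a.s.}(1)$; passing to the limit along the subsequence (using $P_n \to P$ weakly, uniform convergence $G_n \to G_\infty$, and continuity of the integrand on the compact $\Omega$) yields $\norm{I - G_\infty}_{L^2(P)}^2 \le \norm{I - T_0}_{L^2(P)}^2$. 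Second, one must show $(G_\infty)_\sharp P = Q$: from $\L_n(G_n) = O_{a.s.}(1)$ and $\norm{I-G_n}_{L^2(P_n)}^2 \ge 0$ we get $\W_n((G_n)_\sharp P_n, Q_n) \le \L_n(G_n)/\lambda_n \to 0$; then since $\bigcup_n \D_n$ is dense in $\F$ and all objects converge weakly, $\W_n$ converges to $\W$ on the relevant measures, giving $\W((G_\infty)_\sharp P, Q) = 0$, i.e.\ $(G_\infty)_\sharp P = Q$. Hence $G_\infty$ is a transport map from $P$ to $Q$ whose quadratic cost does not exceed that of the optimal map $T_0$; by the uniqueness part of Brenier's theorem (Theorem 2.12 in \citep{villani2003topics}, applicable since $P,Q$ have densities and finite second moments), $G_\infty = T_0$ $P$-a.e., and since both are continuous on $\Omega_P$ and $G_\infty$ is determined on all of $\Omega$, the equality is genuine. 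Because every subsequence has a further subsequence converging uniformly to the same limit $T_0$, the full sequence converges: $\norm{G_n - T_0}_\infty \to 0$ almost surely.

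The main obstacle is the careful handling of the $W_n$ terms: one has to show that replacing the true Wasserstein-$1$ distance by its GroupSort-neural proxy $\W_n$ is harmless in both directions of the argument — upper-bounding $\W_n((\widetilde{G}_n)_\sharp P_n, Q_n)$ requires that $\W_n \le \W$ (immediate, since $\D_n \subseteq \operatorname{Lip}_1$) together with the empirical convergence rate, whereas in the lower bound one needs the reverse, that $\W_n$ does not collapse to something strictly smaller than $\W$ in the limit, which is exactly where the density of $\bigcup_n \D_n$ in $\F$ (Assumption \ref{discriminator}, Lemma~\ref{lm:F}, Theorem~\ref{approx}) is used; some uniformity in $n$ has to be extracted here because the measures $(G_n)_\sharp P_n$ themselves move with $n$. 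A secondary technical point is justifying the a.s.\ statements uniformly: the empirical-Wasserstein rate bounds must hold almost surely (via Borel--Cantelli applied to the known concentration inequalities), and the Arzelà--Ascoli extraction must be reconciled with the almost-sure event on which the rate bounds hold, but these are routine once the deterministic skeleton above is in place.
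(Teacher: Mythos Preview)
Your argument is correct and shares the paper's overall skeleton: extract subsequential limits via Arzel\`a--Ascoli on $\G=\operatorname{Lip}_L(\Omega,B_L)$, show any limit $G_\infty$ satisfies $(G_\infty)_\sharp P=Q$ and has quadratic cost at most that of $T_0$, and conclude by Brenier uniqueness. The substantive difference is the choice of comparison map. You approximate $T_0$ directly by $\widetilde{G}_n\in\G_n$ and control $\W_n((T_0)_\sharp P_n,Q_n)$ using $(T_0)_\sharp P=Q$ together with the empirical Wasserstein rates $\W(P_n,P),\W(Q_n,Q)=O_{a.s.}(r_n)$ that underpin Assumption~\ref{lambda}. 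The paper instead routes the upper bound through the H\"utter--Rigollet minimax estimator $T^{\operatorname{MM}}_n$: an auxiliary Lemma~\ref{minimax} proves $\L_n(T^{\operatorname{MM}}_n)\to\norm{I-T_0}^2_{L^2(P)}$, and the neural comparator $G^{\operatorname{MM}}_n\in\G_n$ is built to approximate $T^{\operatorname{MM}}_n$ rather than $T_0$. Your route is strictly more elementary, because $(T_0)_\sharp P=Q$ exactly kills the middle term $\norm{T^{\operatorname{MM}}_n-T_0}_{L^2(P)}$ in the paper's three-term decomposition, and what remains are the same McDiarmid/empirical-process bounds that already dictate the growth condition on $\lambda_n$; the detour through $T^{\operatorname{MM}}_n$ buys nothing for the qualitative a.s.\ statement (it could matter only if one later tried to quantify rates). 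One cosmetic remark: your opening sentence promises to use the strong convexity in \ref{S-map}(ii) to convert an objective gap into a distance bound, but your actual argument never invokes it and relies on compactness plus Brenier uniqueness, just as the paper does---simply drop that mention.
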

To the best of our knowledge, this is the first statistical consistency result for a neural-network-based optimal transport map. We leave the analysis of consistency rates for future work. In particular, we could obtain sharper results by imposing conditions on the parameter $\epsilon_n$ which characterizes the rate at which the discriminators $\D_n$ approximate the 1-Lipschitz potentials, and by leveraging stronger regularity assumptions on $T_0$. The proof is quite technical; the convergence of $\lambda_n$ to infinity prevents from using classical empirical process techniques. Instead, we rely on more analytical arguments based on the relative compactness properties of Lipschitz functions. Moreover, we note that the proof still holds for more general classes of generators as long as they maintain certain universality properties and have a Lipschitz constant that can be controlled. This is one of the main strengths of GroupSort neural networks: they can sharply approximate any classes of bounded Lipschitz functions with the same Lipschitz constant.

\section{Numerical experiments}\label{num}

The rest of the paper addresses the implementation of our method, and showcases experimental results. Specifically, we do not try to illustrate the convergence rate of the estimator, which is yet to be found, but instead focus on the efficiency and practicality of our GAN-based optimal transport map.

\subsection{Implementation}

In the following experiments, we use ($\cdot\rightarrow 80\rightarrow 80\rightarrow 80\rightarrow\cdot$) densely connected neural networks with GroupSort activation functions for both the generator and the discriminator. We implement GroupSort using Deel-Lip library\footnote{\url{https://deel-lip.readthedocs.io}}. The 1-Lipschitz constraint is enforced through projections onto a parameter space satisfying Assumption~\ref{compact}. The output layer of the generator is multiplied by $L$ to be made $L$-Lipschitz. Critically, since this constant is unknown in practice, we must rely on a large-enough user-defined upper bound. We use Adam with default parameters for the optimization. All experiments have been run on personal workstation with 32GB RAM and NVIDIA Quadro RTX 8000 48GB GPU.

\begin{algorithm}[ht]
\caption{GAN learning of the optimal transport map}\label{alg:w2param}
\begin{algorithmic}
\STATE {\bfseries Input:} source distribution $P$, target distribution $Q$, regularization parameter $\lambda$, discriminator $\{D_{\psi}\}_{\psi \in \Psi}$, generator $\{G_{\phi}\}_{\phi \in \Phi}$, respective learning rates $\eta_D$ and $\eta_G$, minibatch size $m$
\REPEAT
    \REPEAT
        \STATE Sample minibatchs: $\{x_i\}^m_{i=1} \sim P$, $\{y_i\}^m_{i=1}\sim Q$
        \STATE Define cost function:
        $$\mathcal{W}_D(\psi)\mathrel{\mathop:}=\frac{1}{m}\sum^m_{i=1} D_{\psi}(G_{\phi}(x_i))-\frac{1}{m}\sum^m_{i=1} D_{\psi}(y_i)$$
        \STATE Projected gradient ascent step on discriminator: $$\psi\gets \mathcal{P}_{\Psi}\left(\psi+\eta_D \nabla_{\psi}\mathcal{W}_D(\psi)\right)$$
    \UNTIL{convergence of $D_{\psi}$}
    \STATE Sample minibatch: $\{x'_i\}^m_{i=1} \sim P$
    \STATE Define cost functions:
    \begin{align*}
    \begin{aligned}
    \mathcal{W}_G(\phi)\mathrel{\mathop:}=&\frac{1}{m}\sum^m_{i=1} D_{\psi}(G_{\phi}(x'_i))\\
    \mathcal{C}(\phi)\mathrel{\mathop:}=&\frac{1}{m}\sum^m_{i=1}\norm{x'_i-G_{\phi}(x'_i)}^2
    \end{aligned}
    \end{align*}
    \STATE Projected gradient descent step on generator: $$\phi\gets \mathcal{P}_{\Phi}\left(\phi-\eta_G \nabla_{\phi} ( \C(\phi) + \lambda \mathcal{W}_G(\phi) )\right)$$
\UNTIL{convergence of $G_{\phi}$}
\end{algorithmic}
\end{algorithm}

The learning procedure is detailed in Algorithm~\ref{alg:w2param}. In contrast to a WGAN, the generator loss includes the quadratic transportation cost. It also differs from the procedure proposed in \citep{black2020fliptest} by implementing a sharper weight projection than clipping.
  
%The algorithm can be trained either from a fixed size dataset or in an online fashion by repeatedly sampling from the source (see ).

%We enforce $\|W_1\|_2=1$ with spectral normalization and Bj{\"o}rck algorithm~\cite{bjorck1971iterative}, and we enforce $\|W_i\|_{\infty}=\max_j\sum_k|(W_i)_{jk}|\leq L^{\sfrac{1}{l-1}}$ with clipping of rows norms.

\subsection{Experimental results}

\begin{figure}[tb]
    \centering
    \includegraphics[scale=0.3]{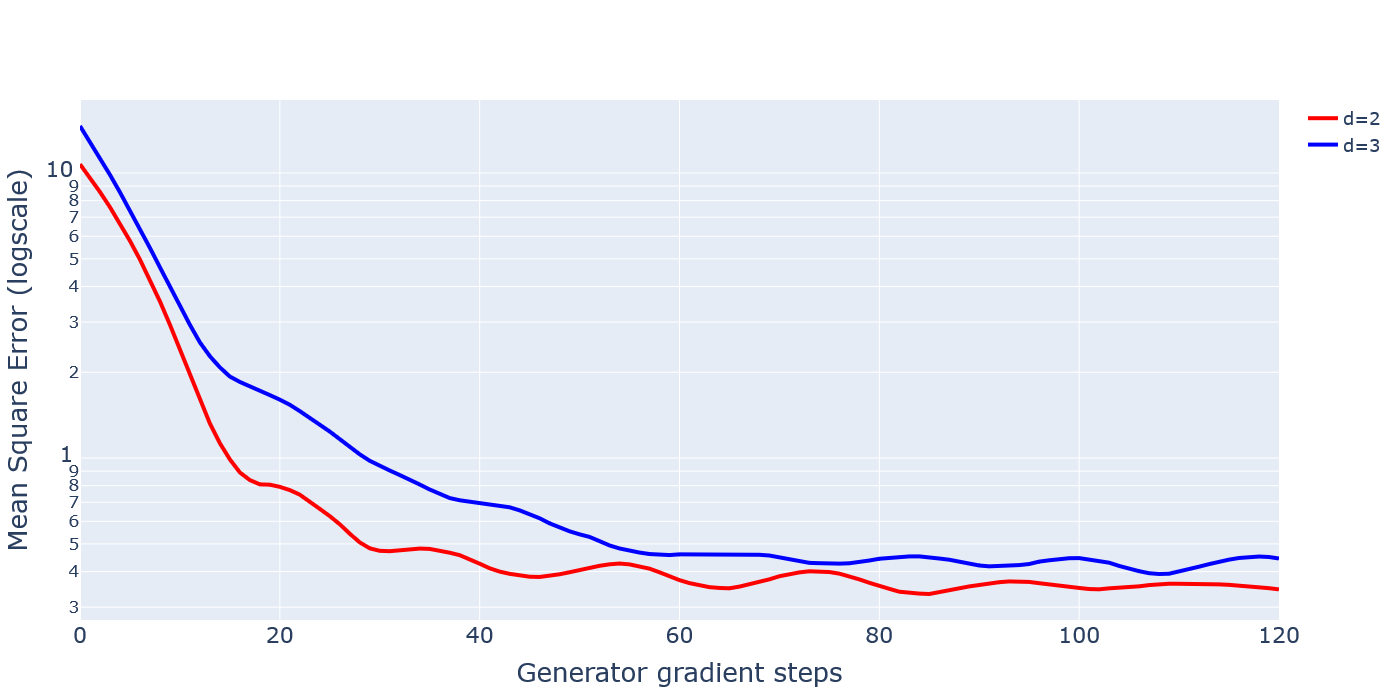}
    \caption{Evolution of the mean square error $\|T_0-G\|_2^2$ during the learning process as function of the number of gradient steps on generator with batch size $512$, for $x \mapsto \frac{1}{1.18}(\exp{x}-1.18)$. The number of samples used is proportional to the number of steps.}
    \label{fig:convergence_speed}
\end{figure}
  
We evaluate how close the trained generator $G$ is to the optimal transport map $T_0$. Recall that our construction, as in \citep{hutter2021minimax,pooladian2021entropic}, is tailored to settings where the optimal transport map is at least Lipschitz, hence continuous. This excludes in particular target distributions with disconnected supports. Firstly, we address a setting where the true optimal transport map $T_0$ in unknown. Figure~\ref{fig:failure_mode} benchmarks the GAN estimator against the empirical optimal transport map on the TwoMoons dataset. We used the POT library to compute the discrete matching \citep{flamary2021pot}. It shows that the generator faithfully matches the two moons with respect to the quadratic transportation cost.

Secondly, we consider synthetic examples for which $T_0$ has an explicit formula. We follow the protocol adopted in the aforementioned papers by defining $P$ as the uniform distribution on the hypercube $[-1, 1]^d$ and setting $Q := {T_0}_\sharp P$, where $T_0:\R^d\to\R^d$ is obtained by applying a monotone scalar function coordinate-wise. The combination of McCann's theorem \citep{mccann1995}, stating that there exists a unique gradient of a convex function achieving the push-forward between two Lebesgue-absolutely-continuous distributions, and Theorem~2.12 in \citep{villani2003topics}, stating that an optimal transport map coincide almost-everywhere with the gradient of a convex function, ensures that $T_0$ constructed as such is the (almost everywhere unique) optimal transport map between $P$ and $Q$. Note that for practical reasons, we choose $T_0$ such that $Q$ is a distribution with zero mean and width less than $2$: normalizing the input and output distributions of a neural network ensures faster convergence. The result are illustrated in Figure~\ref{fig:exp2d}.

%Note that in general for mini-batchs $p$ (resp. $q$) sampled from $P$ (resp. $Q$) we have $q\neq (T_0)_\sharp p$, hence we rely on stochastic approximation of true gradients for the maximization of $\int f (\mathrm{d}\mu - \mathrm{d}\nu)$. We evaluate the mean square error of the approximation $G$ by Monte-Carlo integration over $\|G-T_0\|_2^2$ over $[-1,1]^d$.  
  
%Figure~\ref{fig:exp2d} shows the generated and target distributions for several synthetic maps $T_0$. Most of these experiments are inspired from \cite{hutter2021minimax,pooladian2021entropic}. We point out that our method can handle dimensions up to {\color{red}d=10}. The scalability of the method enables also to compute the approximation of the optimal transport map for a number of observations larger than the one proposed in the previous work, up to {\color{red} $n=...$}.

%Actually, note that the algorithm is an indirect method with complexity $O(n)$, whereas most plug-in estimators usually run in $O(n^3)$. Hence it can handle huge datasets in high dimension transparently. It only converges asymptotically, but early stopping yields a low precision approximation of the true map. It is specially useful in situation where approximate optimal transportation map is sufficient, such as Deep Learning.

Additionally, we investigate in Figure~\ref{fig:convergence_speed} the evolution of the mean square error between the generator $G$ and the optimal transport map $T_0$ as the learning process goes on. It confirms that the optimization scheme has the expected behaviour. Furthermore, since the mean square error is evaluated on an independent sample to the training set, it illustrates the generalization ability of the learnt map.  

%The map $G$ is defined over the whole domain $\operatorname{supp}(P)$ so it can generalize to unseen examples. This is another advantage compared to formulations based on linear programming. This appealing property is shared by other methods based on WGAN and neural networks \cite{makkuva2020optimal,korotin2021wasserstein,huang2021convex} but they generally lack statistical convergence guarantees.
  
%\subsection{Feasability}
% While Assumption~\ref{S-map} is reasonable for many use cases, it remains absolutely necessary for the convergence of the algorithm. \emph{GroupSort} neural networks cannot easily approximate discontinuous maps. The example of figure~\ref{fig:failure_mode} illustrates the phenomenon: we used the same example as~\cite{makkuva2020optimal} and~\cite{leygonie2019adversarial} where $Q$ is a distribution with disconnected support. The transportation map $G$ computed is not the optimal one with respect to $\|\cdot\|_2^2$ distance. Visually the vector field exhibits a non zero curl: however this contradicts the property fulfilled by optimal map $\nabla\times T_0=\nabla\times (\nabla f_0)=0$.  

% \begin{figure}[ht]
% \vskip 0.2in
% \begin{center}
% \centerline{\includegraphics[width=\columnwidth]{figures/T_fusion.png}}
% \caption{Comparison of ${T_0}_\sharp P$ and $G_\sharp P$ in 2D examples with $10,000$ points. The generator $G$ is trained for $3000$ gradient steps. In the left figure, $T_0(x)= \frac{1}{0.65}(\exp{x}-1.18)$; in the right figure, $T_0(x)=\frac{3}{1.34}x^2\text{sign}(x)$.}
% \label{figure}
% \end{center}
% \vskip -0.2in
% \end{figure}

\section{Conclusion}

The method we propose has the advantage of providing a theoretically sound and feasible estimation of the optimal transport map whose statistical convergence can be mathematically certified. Theorem~\ref{main} proves its consistency, while Section~\ref{num} highlights its feasibility and illustrates its ability to learn the underlying map. This renders this estimator suitable for many applications where guarantees of convergence are required while maintaining a high level of computational performance. 

Additionally, we extended in Section~\ref{lnn} the established theory on approximating Lipschitz continuous functions by GroupSort neural networks to the multivariate case. This also opens new lines of inquiry for further applications of these networks, such as imposing regularity properties on generative models. Finally, our statistical framework and mathematical proofs addressed several interesting problems at the frontier between neural network modeling and statistics. We hope this effort will contribute to bridge the gap between deep learning and statistical theory. 

%The optimal transport approximation we introduced has two main interests. Firstly, it is theoretically well-grounded, due to its statistical convergence to the optimal transport map. Secondly, our construction is implementable, and tailored to large-scale settings. Due the expressiveness of neural networks, it displays promising experimental performances in contrast to state-of-the-art approximations. As such, our estimator is tailored to a wide range of optimal-transport frameworks in machine learning, without compromising on theoretical soundness. More generally, we believe our approach helps bridging the gap between GAN and statistics.

% Acknowledgements should only appear in the accepted version.

%\section*{Acknowledgements}
%The authors thank the AI Interdisciplinary Institute ANITI, which is funded by the French “Investing for the Future – PIA3” program under the Grant agreement ANR-19-PI3A-0004.

\bibliography{references}

\begin{thebibliography}{40}
\providecommand{\natexlab}[1]{#1}
\providecommand{\url}[1]{\texttt{#1}}
\expandafter\ifx\csname urlstyle\endcsname\relax
  \providecommand{\doi}[1]{doi: #1}\else
  \providecommand{\doi}{doi: \begingroup \urlstyle{rm}\Url}\fi

\bibitem[Anil et~al.(2019)Anil, Lucas, and Grosse]{anil2019sorting}
C.~Anil, J.~Lucas, and R.~Grosse.
\newblock Sorting out {L}ipschitz function approximation.
\newblock In \emph{International Conference on Machine Learning}, pages
  291--301. PMLR, 2019.

\bibitem[Arjovsky et~al.(2017)Arjovsky, Chintala, and
  Bottou]{arjovsky2017wasserstein}
M.~Arjovsky, S.~Chintala, and L.~Bottou.
\newblock Wasserstein generative adversarial networks.
\newblock In \emph{International Conference on Machine Learning}, pages
  214--223. PMLR, 2017.

\bibitem[Beirlant et~al.(2020)Beirlant, Buitendag, {del Barrio}, Hallin, and
  Kamper]{beirlant2020center}
J.~Beirlant, S.~Buitendag, E.~{del Barrio}, M.~Hallin, and F.~Kamper.
\newblock Center-outward quantiles and the measurement of multivariate risk.
\newblock \emph{Insurance: Mathematics and Economics}, 95:\penalty0 79--100,
  2020.

\bibitem[Berk et~al.(2021)Berk, Kuchibhotla, and Tchetgen]{berk2021improving}
R.~A. Berk, A.~K. Kuchibhotla, and E.~T. Tchetgen.
\newblock Improving fairness in criminal justice algorithmic risk assessments
  using optimal transport and conformal prediction sets.
\newblock \emph{arXiv preprint arXiv:2111.09211}, 2021.

\bibitem[Biau et~al.(2021)Biau, Sangnier, and Tanielian]{biau2021some}
G.~Biau, M.~Sangnier, and U.~Tanielian.
\newblock Some theoretical insights into {W}asserstein {GAN}s.
\newblock \emph{Journal of Machine Learning Research}, 2021.

\bibitem[Black et~al.(2020)Black, Yeom, and Fredrikson]{black2020fliptest}
E.~Black, S.~Yeom, and M.~Fredrikson.
\newblock Fliptest: fairness testing via optimal transport.
\newblock In \emph{Conference on Fairness, Accountability, and Transparency},
  pages 111--121, 2020.

\bibitem[Boucheron et~al.(2013)Boucheron, Lugosi, and
  Massart]{boucheron2013concentration}
S.~Boucheron, G.~Lugosi, and P.~Massart.
\newblock \emph{Concentration inequalities: A nonasymptotic theory of
  independence}.
\newblock Oxford university press, 2013.

\bibitem[Brenier(1991)]{brenier1991polar}
Y.~Brenier.
\newblock Polar factorization and monotone rearrangement of vector-valued
  functions.
\newblock \emph{Communications on pure and applied mathematics}, 44\penalty0
  (4):\penalty0 375--417, 1991.

\bibitem[Béthune et~al.(2021)Béthune, González-Sanz, Mamalet, and
  Serrurier]{bethune2021faces}
L.~Béthune, A.~González-Sanz, F.~Mamalet, and M.~Serrurier.
\newblock The many faces of 1-{L}ipschitz neural networks, 2021.

\bibitem[Courty et~al.(2014)Courty, Flamary, and Tuia]{courty2014domain}
N.~Courty, R.~Flamary, and D.~Tuia.
\newblock Domain adaptation with regularized optimal transport.
\newblock In \emph{Joint European Conference on Machine Learning and Knowledge
  Discovery in Databases}, pages 274--289. Springer, 2014.

\bibitem[Cuesta and Matr{\'a}n(1989)]{cuesta1989notes}
J.~A. Cuesta and C.~Matr{\'a}n.
\newblock Notes on the {W}asserstein metric in {H}ilbert spaces.
\newblock \emph{The Annals of Probability}, pages 1264--1276, 1989.

\bibitem[Cuturi(2013)]{cuturi2013sinkhorn}
M.~Cuturi.
\newblock Sinkhorn distances: Lightspeed computation of optimal transport.
\newblock \emph{Advances in neural information processing systems},
  26:\penalty0 2292--2300, 2013.

\bibitem[De~Lara et~al.(2021)De~Lara, Gonz{\'a}lez-Sanz, Asher, and
  Loubes]{de2021transport}
L.~De~Lara, A.~Gonz{\'a}lez-Sanz, N.~Asher, and J.-M. Loubes.
\newblock Transport-based counterfactual models.
\newblock \emph{arXiv preprint arXiv:2108.13025}, 2021.

\bibitem[Flamary et~al.(2021)Flamary, Courty, Gramfort, Alaya, Boisbunon,
  Chambon, Chapel, Corenflos, Fatras, Fournier, et~al.]{flamary2021pot}
R.~Flamary, N.~Courty, A.~Gramfort, M.~Z. Alaya, A.~Boisbunon, S.~Chambon,
  L.~Chapel, A.~Corenflos, K.~Fatras, N.~Fournier, et~al.
\newblock {POT}: Python optimal transport.
\newblock \emph{Journal of Machine Learning Research}, 22\penalty0
  (78):\penalty0 1--8, 2021.

\bibitem[Gayraud et~al.(2017)Gayraud, Rakotomamonjy, and
  Clerc]{gayraud2017optimal}
N.~T. Gayraud, A.~Rakotomamonjy, and M.~Clerc.
\newblock Optimal transport applied to transfer learning for p300 detection.
\newblock In \emph{BCI 2017-7th Graz Brain-Computer Interface Conference},
  page~6, 2017.

\bibitem[Goodfellow et~al.(2014)Goodfellow, Pouget-Abadie, Mirza, Xu,
  Warde-Farley, Ozair, Courville, and Bengio]{goodfellow2014generative}
I.~Goodfellow, J.~Pouget-Abadie, M.~Mirza, B.~Xu, D.~Warde-Farley, S.~Ozair,
  A.~Courville, and Y.~Bengio.
\newblock Generative adversarial nets.
\newblock \emph{Advances in neural information processing systems}, 27, 2014.

\bibitem[Gordaliza et~al.(2019)Gordaliza, Del~Barrio, Fabrice, and
  Loubes]{gordaliza2019obtaining}
P.~Gordaliza, E.~Del~Barrio, G.~Fabrice, and J.-M. Loubes.
\newblock Obtaining fairness using optimal transport theory.
\newblock In \emph{International Conference on Machine Learning}, pages
  2357--2365. PMLR, 2019.

\bibitem[Gulrajani et~al.(2017)Gulrajani, Ahmed, Arjovsky, Dumoulin, and
  Courville]{gulrajani2017improved}
I.~Gulrajani, F.~Ahmed, M.~Arjovsky, V.~Dumoulin, and A.~Courville.
\newblock Improved training of {W}asserstein {GAN}s.
\newblock In \emph{Advances in Neural Information Processing Systems}, pages
  5769--5779, 2017.

\bibitem[Hallin et~al.(2021)Hallin, del Barrio, Cuesta-Albertos, and
  Matrán]{hallin2021distribution}
M.~Hallin, E.~del Barrio, J.~Cuesta-Albertos, and C.~Matrán.
\newblock {Distribution and quantile functions, ranks and signs in dimension d:
  A measure transportation approach}.
\newblock \emph{The Annals of Statistics}, 49\penalty0 (2):\penalty0 1139 --
  1165, 2021.

\bibitem[He et~al.(2020)He, Li, Xu, and Zheng]{he2020relu}
J.~He, L.~Li, J.~Xu, and C.~Zheng.
\newblock Relu deep neural networks and linear finite elements.
\newblock \emph{Journal of Computational Mathematics}, 38\penalty0
  (3):\penalty0 502--527, 2020.

\bibitem[Heinonen(2005)]{heinonen2005lectures}
J.~Heinonen.
\newblock \emph{Lectures on {L}ipschitz analysis}.
\newblock Number 100. University of Jyv{\"a}skyl{\"a}, 2005.

\bibitem[Huang et~al.(2021)Huang, Chen, Tsirigotis, and
  Courville]{huang2021convex}
C.-W. Huang, R.~T.~Q. Chen, C.~Tsirigotis, and A.~Courville.
\newblock Convex potential flows: Universal probability distributions with
  optimal transport and convex optimization.
\newblock In \emph{International Conference on Learning Representations}, 2021.
\newblock URL \url{https://openreview.net/forum?id=te7PVH1sPxJ}.

\bibitem[H{\"u}tter and Rigollet(2021)]{hutter2021minimax}
J.-C. H{\"u}tter and P.~Rigollet.
\newblock Minimax estimation of smooth optimal transport maps.
\newblock \emph{The Annals of Statistics}, 49\penalty0 (2):\penalty0
  1166--1194, 2021.

\bibitem[Kantorovich and Rubinshtein(1958)]{kantorovich1958space}
L.~V. Kantorovich and S.~Rubinshtein.
\newblock On a space of totally additive functions.
\newblock \emph{Vestnik of the St. Petersburg University: Mathematics},
  13\penalty0 (7):\penalty0 52--59, 1958.

\bibitem[Kolouri et~al.(2017)Kolouri, Park, Thorpe, Slepcev, and
  Rohde]{kolouri2017optimal}
S.~Kolouri, S.~R. Park, M.~Thorpe, D.~Slepcev, and G.~K. Rohde.
\newblock Optimal mass transport: Signal processing and machine-learning
  applications.
\newblock \emph{IEEE signal processing magazine}, 34\penalty0 (4):\penalty0
  43--59, 2017.

\bibitem[Korotin et~al.(2021)Korotin, Egiazarian, Asadulaev, Safin, and
  Burnaev]{korotin2021wasserstein}
A.~Korotin, V.~Egiazarian, A.~Asadulaev, A.~Safin, and E.~Burnaev.
\newblock Wasserstein-2 generative networks.
\newblock In \emph{International Conference on Learning Representations}, 2021.
\newblock URL \url{https://openreview.net/forum?id=bEoxzW_EXsa}.

\bibitem[Leygonie et~al.(2019)Leygonie, She, Almahairi, Rajeswar, and
  Courville]{leygonie2019adversarial}
J.~Leygonie, J.~She, A.~Almahairi, S.~Rajeswar, and A.~Courville.
\newblock Adversarial computation of optimal transport maps.
\newblock \emph{arXiv preprint arXiv:1906.09691}, 2019.

\bibitem[Makkuva et~al.(2020)Makkuva, Taghvaei, Oh, and
  Lee]{makkuva2020optimal}
A.~Makkuva, A.~Taghvaei, S.~Oh, and J.~Lee.
\newblock Optimal transport mapping via input convex neural networks.
\newblock In \emph{International Conference on Machine Learning}, pages
  6672--6681. PMLR, 2020.

\bibitem[Manole et~al.(2021)Manole, Balakrishnan, Niles-Weed, and
  Wasserman]{manole2021plugin}
T.~Manole, S.~Balakrishnan, J.~Niles-Weed, and L.~Wasserman.
\newblock Plugin estimation of smooth optimal transport maps.
\newblock \emph{arXiv preprint arXiv:2107.12364}, 2021.

\bibitem[McCann(1995)]{mccann1995}
R.~J. McCann.
\newblock Existence and uniqueness of monotone measure-preserving maps.
\newblock \emph{Duke Math. J.}, 80\penalty0 (2):\penalty0 309--323, 11 1995.
\newblock \doi{10.1215/S0012-7094-95-08013-2}.

\bibitem[Monge(1781)]{monge1781memoire}
G.~Monge.
\newblock M{\'e}moire sur la th{\'e}orie des d{\'e}blais et des remblais.
\newblock \emph{Histoire de l'Acad{\'e}mie Royale des Sciences de Paris}, 1781.

\bibitem[Peyr{\'e} and Cuturi(2019)]{peyre2019computational}
G.~Peyr{\'e} and M.~Cuturi.
\newblock Computational optimal transport: With applications to data science.
\newblock \emph{Foundations and Trends{\textregistered} in Machine Learning},
  11\penalty0 (5-6):\penalty0 355--607, 2019.

\bibitem[Pooladian and Niles-Weed(2021)]{pooladian2021entropic}
A.-A. Pooladian and J.~Niles-Weed.
\newblock Entropic estimation of optimal transport maps.
\newblock \emph{arXiv preprint arXiv:2109.12004}, 2021.

\bibitem[Redko et~al.(2019)Redko, Courty, Flamary, and Tuia]{redko2019optimal}
I.~Redko, N.~Courty, R.~Flamary, and D.~Tuia.
\newblock Optimal transport for multi-source domain adaptation under target
  shift.
\newblock In \emph{International Conference on Artificial Intelligence and
  Statistics}, pages 849--858. PMLR, 2019.

\bibitem[Schreuder(2020)]{schreuder2020bounding}
N.~Schreuder.
\newblock Bounding the expectation of the supremum of empirical processes
  indexed by hölder classes.
\newblock \emph{arXiv preprint arXiv:2003.13530}, 2020.

\bibitem[Seguy et~al.(2018)Seguy, Damodaran, Flamary, Courty, Rolet, and
  Blondel]{seguy2018large}
V.~Seguy, B.~B. Damodaran, R.~Flamary, N.~Courty, A.~Rolet, and M.~Blondel.
\newblock Large-scale optimal transport and mapping estimation.
\newblock In \emph{International Conference on Learning Representations}, pages
  1--15, 2018.

\bibitem[Tanielian and Biau(2021)]{tanielian2021approximating}
U.~Tanielian and G.~Biau.
\newblock Approximating {L}ipschitz continuous functions with groupsort neural
  networks.
\newblock In \emph{International Conference on Artificial Intelligence and
  Statistics}, pages 442--450. PMLR, 2021.

\bibitem[Van Der~Vaart and Wellner(1996)]{van1996weak}
A.~Van Der~Vaart and J.~Wellner.
\newblock \emph{Weak convergence and empirical processes: with applications to
  statistics}.
\newblock Springer Science \& Business Media, 1996.

\bibitem[Villani(2003)]{villani2003topics}
C.~Villani.
\newblock \emph{Topics in optimal transportation}.
\newblock Number~58 in Graduate Studies in Mathematics. American Mathematical
  Soc., 2003.

\bibitem[Villani(2008)]{villani2008optimal}
C.~Villani.
\newblock \emph{Optimal Transport: Old and New}.
\newblock Number 338 in Grundlehren der mathematischen Wissenschaften.
  {Springer}, Berlin, 2008.

\end{thebibliography}
\bibliographystyle{abbrvnat}

\newpage
%%%%%%%%%%%%%%%%%%%%%%%%%%%%%%%%%%%%%%%%%%%%%%%%%%%%%%%%%%%%%%%%%%%%%%%%%%%%%%%
%%%%%%%%%%%%%%%%%%%%%%%%%%%%%%%%%%%%%%%%%%%%%%%%%%%%%%%%%%%%%%%%%%%%%%%%%%%%%%%
% DELETE THIS PART. DO NOT PLACE CONTENT AFTER THE REFERENCES!
%%%%%%%%%%%%%%%%%%%%%%%%%%%%%%%%%%%%%%%%%%%%%%%%%%%%%%%%%%%%%%%%%%%%%%%%%%%%%%%
%%%%%%%%%%%%%%%%%%%%%%%%%%%%%%%%%%%%%%%%%%%%%%%%%%%%%%%%%%%%%%%%%%%%%%%%%%%%%%%
\onecolumn

\appendix

%\section{Additional experiments}

%We reproduce the experiments of Figure~\ref{fig:exp2d} in three dimensions in Figure~\ref{fig:exp3d}.
% \begin{figure*}[t]
%     \centering
%     \begin{minipage}{0.48\linewidth}
%         \centering
%         \includegraphics[width=\linewidth]{figures/T_exp-3D.png}
%         \caption{Map $T_0(x)= \frac{1}{0.65}(\exp{x}-1.18)$.}
%     \end{minipage}\hfill
%     \begin{minipage}{0.48\linewidth}
%         \centering
%         \includegraphics[width=\linewidth]{figures/T_sign-3D.png}
%         \caption{Map $T_0(x)=\frac{3}{1.34}x^2\text{sign}(x)$.}
%     \end{minipage}\hfill
%     \label{fig:exp3d}
%     \caption{Visualisation of $T_0$ and $G$ in three dimensions with $10,000$ points. Generator is trained for $3000$ gradient steps.}
% \end{figure*}

\section{Proof of Theorem \ref{approx}}

Let $f \in \F \subset \operatorname{Lip_1}(\Omega,\R)$ such that $\sup_{f \in \F} \norm{f}_\infty \leq K_\F$. The idea is to generalize Theorem~2 in \citep{tanielian2021approximating}, restricted to 1-Lipschitz functions on the hypercube $[0,1]^d$, to functions on the arbitrary compact set $\Omega$. To this end, we first transform $f$ into a 1-Lipschitz function on the hypercube $[0,1]^d$.

Since $\Omega$ is compact then there exists some $R>0$ such that $\Omega \subset [-R,R]^d$.  Kirszbraun’s theorem, see for instance Theorem 2.5 in \citep{heinonen2005lectures}, implies that we can extend $f$ on $[-R,R]^d$ while preserving the 1-Lipschitz property. Concretely, there exists a function $\tilde{f}\in \text{Lip}_1( [-R,R]^d,\R)$ such that $\tilde{f}(x)=f(x)$ for all $x \in \Omega$. %Then, we define $\tilde{\mathcal{F}}:=\{\tilde{f}, f \in \mathcal{F}\}$ and since $\sup_{f\in \mathcal{F}} \norm{f}_{\infty} \leq K_\F$, we have that $\sup_{\tilde{f}\in \tilde{\mathcal{F}}} \norm{f}_\infty \leq K_\F + R$.

Now, we transform the extension $\tilde{f}$ into a 1-Lipschitz function on the hypercube $[0,1]^d$. This requires to translate and scale the inputs. Set $x_R = R \cdot \mathbf{1}$ where $\mathbf{1} := (1,\ldots,1) \in \R^d$, and define $f_R(x) := \frac{1}{2R}\tilde{f}(2Rx-x_R)$ as a function on $[0,1]^d$. Theorem~2 in \citep{tanielian2021approximating} yields that, for every $\epsilon>0$, there exists a neural network $N$ of the form \eqref{GS} satisfying Assumption \ref{compact} defined on $[0,1]^d$ whose depth and size are respectively
$$
l= O\left(d^2\log_2\left(\frac{2\sqrt{d}}{\epsilon}\right)\right) \ \text{and} \ s={O\left(\left(\frac{2\sqrt{d}}{\epsilon}\right)^{d^2}\right)},$$ 
such that 
\begin{equation}\label{eq:fRN}
    \sup_{x \in[0,1]^d} \abs{f_R(x)-N(x)} <\epsilon.
\end{equation} 
However, \citet{tanielian2021approximating} never clearly specified a universal bound $C$ for which Assumption \ref{compact} was satisfied, which is necessary to conclude. To find such a bound, we detail how they constructed the GroupSort neural network $N$ approximating $f_R$. First, note that according to Theorem 5.1 in \citep{he2020relu}, any 1-Lipschitz piecewise-affine function $q$ defined on a compact set can be written as,
\begin{align}\label{eq:represent_f_2}
    q(x) = \max_{1\leq s\leq m}\min_{i \in I_s}(a_i \cdot x + c_i),
\end{align}
where for any $1\leq s\leq m$, $I_s$ is a subset of $\{1,\dots,m\}$ and $\norm{a_i} \leq 1$. Second, following the proof of Theorem 2 in \citep{tanielian2021approximating}, one can find a 1-Lipschitz piecewise-affine function $q$ such that $\norm{q - f_R} \leq \epsilon$. Finally, Theorem~1 in \citep{tanielian2021approximating}, states that $q$ can be represented by a neural network $N$ of the form \eqref{GS} with depth $l$ and size $s$. Critically, the representing $N$ is built with weights $(W_1,\ldots,W_l,b_1,\ldots,b_l)$ such that the offset vectors of $N$ are all equal to zero except $b_1$. More precisely, the coefficients of $b_1$ are the constants $c_1,\ldots,c_m$ from the representation \eqref{eq:represent_f_2}. This entails that $\max_{1 \leq i \leq l} \norm{b_i}_{\infty} \leq \max_{1 \leq i \leq m} \abs{c_i}$. Hence, bounding the constants in \eqref{eq:represent_f_2} will bound the offsets vectors in \eqref{GS}. To find a bound on the constants, we rely on the following lemma.
\begin{lemma}\label{Lemma:universal}
Let $ f_1 \in \operatorname{Lip}_1([0,1]^d,\R)$ and $f_2$ be a 1-Lipschitz piecewise-linear function such that $\norm{f_2-f_1}_{\infty} <\epsilon$. Then, $f_2$ can be expressed in the form \eqref{eq:represent_f_2} with $$\max_{1 \leq i \leq m} \abs{c_i} \leq \norm{f_1}_{\infty}+\epsilon+\sqrt{d}.$$
\end{lemma}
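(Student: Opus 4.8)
\textbf{Proof proposal for Lemma~\ref{Lemma:universal}.}

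The plan is to start from a concrete representation of $f_2$ in the $\max$–$\min$ form \eqref{eq:represent_f_2} — which exists by Theorem~5.1 in \citep{he2020relu} since $f_2$ is a $1$-Lipschitz piecewise-linear function on a compact set — and then argue that the constants $c_i$ appearing in it cannot be too large, using the uniform proximity of $f_2$ to the bounded function $f_1$ together with the fact that each affine piece $a_i \cdot x + c_i$ is genuinely attained by $f_2$ at some point of the domain. Write $g_i(x) := a_i \cdot x + c_i$. The combinatorial structure of \eqref{eq:represent_f_2} guarantees that for each index $i$ there is a point $x^\star \in [0,1]^d$ at which $g_i(x^\star) = f_2(x^\star)$: indeed $i$ belongs to some $I_s$, and on the region where the inner $\min$ over $I_s$ is realized by $i$ and the outer $\max$ is realized by $s$, we have $f_2 = g_i$; such a region is nonempty because otherwise the piece $g_i$ could be deleted from the representation without changing $f_2$ (and if \emph{every} occurrence of $i$ is redundant we simply drop it, which only decreases $\max_i |c_i|$). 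The one subtlety is to make sure this argument is applied to a representation in which no piece is redundant; I would either invoke that \citep{he2020relu} already produces such an irredundant representation, or note that one may pass to an irredundant subrepresentation without increasing the bound.

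Given such an $x^\star$ with $g_i(x^\star) = f_2(x^\star)$, I would estimate
$$
|c_i| = |g_i(x^\star) - a_i \cdot x^\star| \leq |g_i(x^\star)| + |a_i \cdot x^\star| = |f_2(x^\star)| + |a_i \cdot x^\star|.
$$
For the first term, $|f_2(x^\star)| \leq |f_1(x^\star)| + \|f_2 - f_1\|_\infty \leq \|f_1\|_\infty + \epsilon$. For the second term, Cauchy–Schwarz and $\|a_i\| \leq 1$ give $|a_i \cdot x^\star| \leq \|a_i\|\,\|x^\star\| \leq \|x^\star\| \leq \sqrt{d}$, since $x^\star \in [0,1]^d$. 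Combining these yields $|c_i| \leq \|f_1\|_\infty + \epsilon + \sqrt{d}$, and taking the maximum over $i$ finishes the proof.

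The only real obstacle is the bookkeeping around redundant pieces: one must be careful that the representation \eqref{eq:represent_f_2} can be chosen so that each affine function $g_i$ is actually active on a nonempty region, so that the evaluation point $x^\star$ exists. Everything after that is the elementary two-line estimate above. I expect no difficulty with the Lipschitz constant of $f_2$ itself — it is used only implicitly, through $\|a_i\| \leq 1$, which is part of the hypothesis on the representation \eqref{eq:represent_f_2} rather than something we need to re-derive here.
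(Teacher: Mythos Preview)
Your proof is correct and follows essentially the same approach as the paper: both assume without loss of generality that each affine piece $a_i\cdot x + c_i$ is actually attained by $f_2$ at some point of $[0,1]^d$ (dropping redundant indices otherwise), then bound $|c_i|$ by $\|f_2\|_\infty + \sup_{x\in[0,1]^d}\|x\| \leq \|f_1\|_\infty + \epsilon + \sqrt{d}$ via $\|a_i\|\leq 1$. Your treatment of the redundancy issue is slightly more explicit than the paper's one-line ``otherwise this index is meaningless and we can eliminate it,'' but the argument is the same.
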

\begin{proof}
Note that we can suppose without loss of generality that for any $k \in \{1,\ldots,m\}$ there exists a point $x_k \in \Omega$ such that $f_2(x_k) = a_k \cdot x_k+c_k$, otherwise this index is meaningless and we can eliminate it. Since $\norm{a_k} \leq 1$, we have that $\abs{c_k}\leq \norm{f_2}_{\infty}+\sup_{x \in [0,1]^d} \norm{x}$. We conclude using the fact that $\norm{f_2}_{\infty} \leq \norm{f_1}_{\infty} + \epsilon$.
\end{proof}
This implies that the $\epsilon$-approximation $q$ of $f_R$ is such that $\max_{1 \leq i \leq m} \abs{c_i} \leq K_\F + \epsilon + \sqrt{d}$, and that consequently, the neural network $N$ approximating $f_R$ belongs to $\mathcal{N}^1_{C_0}(l,s)$ with $C_0 = K_\F + \epsilon + \sqrt{d}$.

Now, recall the the objective is to construct a neural network approximating $f$. Note that, after a change of variable, \eqref{eq:fRN} can be written as
$$
\sup_{x \in[-R,R]^d} \abs{\tilde{f}(x)-2R N\left(\frac{x+x_R}{2R}\right)  }<2R\epsilon.
$$ 
Since the activation functions are GroupSort, hence homogeneous, we have that $2R N\left(\frac{x+x_R}{2R}\right) = N(x+x_R)$. This leads to
$$
\norm{f - N_R}_{\infty} \leq \sup_{x \in[-R,R]^d} \abs{\tilde{f}(x) - N_R(x)} < 2R\epsilon,
$$
Finally, remark that the neural network $N_R : x \mapsto N(x+x_R)$ belongs to $\mathcal{N}^1_C(l,s)$ with $C = \sqrt{d}R + C_0$ that is $\sqrt{d}(R+1) + K_\F + \epsilon$. Setting $R = \sup_{x \in \Omega} \norm{x}$ completes the proof.

\section{Proof of Theorem \ref{approx_d}}

Let $g \in \G \subset \operatorname{Lip}_1(\Omega,\R^p)$ such that $\sup_{g \in \G} \norm{g}_{\infty} = K_\G > 0$. We generalize Theorem \ref{approx} to $\R^p$-valued output by approximating $g$ along each dimension by a GroupSort neural network. The function $g$ can be written as $(g_1, \ldots, g_p)$ where $g_i \in \operatorname{Lip}_1(\Omega,\R)$ and $\norm{g_i}_{\infty} \leq K_\G$ for every $1 \leq i \leq p$. Then, we know from Theorem \ref{approx} that there exists a neural network $N^i \in \mathcal{N}^1_{C}$ where $C = K_\G + \sqrt{d}(\sup_{x \in \Omega} \norm{x} +1) + \varepsilon$, whose depth and size are respectively
$$
l= O\left(d^2\log_2\left(\frac{2\sqrt{d}}{\epsilon}\right)\right) \ \text{and} \ s={O\left(\left(\frac{2\sqrt{d}}{\epsilon}\right)^{d^2}\right)},$$ 
such that,
$$
    \norm{g_i - N^i}_{\infty} \leq \varepsilon.
$$
We build the $\R^p$-valued neural network $N = (N^1, \ldots, N^p)$. Then, for any $x \in \Omega$,
$$
\norm{g(x) - N(x)}^2 = \sum^p_{i=1} \abs{g_i(x) - N^i(x)}^2 \leq p \varepsilon^2.
$$
As a consequence, $\norm{g - N}_{\infty} \leq \sqrt{p} \varepsilon$.
To conclude, note that $N$ and has depth $l$ and size $p \times s$. Moreover, it satisfies Assumption \ref{compact} for the constant $C$, as the weight matrices and offset vectors of $N$ are obtained by concatenation of the ones of the $N^i$, which preserves the upper-bound on the norms $\norm{\cdot}_{2,\infty}$ and $\norm{\cdot}_{\infty}$. Consequently, $N \in \mathcal{N}^p_C(l, p \times s)$.

\section{Proof of Proposition \ref{existence}}

The proof amounts to showing that $\L_n$ is continuous on the compact set $\G_n$.

\begin{proof}

Firstly, we note that the map $\L^{ot}_n : T \mapsto \norm{I-T}_{L^2(P_n)}$ is continuous. Secondly, we prove that $\L^{gen}_n : T \mapsto \lambda_n \W_n(T_\sharp P_n, Q_n)$ is Lipschitz continuous. Let $T_1,T_2 \in \C(\Omega,\Omega)$ and compute,

\begin{align*}
    \abs{\W_n({T_1}_\sharp P_n, Q_n) - \W_n({T_2}_\sharp P_n, Q_n)} &\leq \abs{\sup_{D\in \D_n}\left\lbrace\int D(T_1(x))- D(T_2(x))\mathrm{d} P_n(x)\right\rbrace} \\
    &\leq \sup_{D\in \operatorname{Lip}_1(\Omega,\R)}\abs{\int D(T_1(x))- D(T_2(x))\mathrm{d} P_n(x)}\\
    &\leq \int \norm{T_1(x)- T_2(x)}\mathrm{d} P_n(x)\\
    &\leq \norm{T_1-T_2}_{\infty}.
\end{align*}
As a conclusion, $\L_n := \L^{ot}_n + \L^{gen}_n$ is continuous, and as such admits a minimizer on any compact set, in particular $\G_n$.

\end{proof}

\section{Proof of Theorem \ref{main}}

The proof relies on an intermediary result on the minimax estimator described in Section 5 of \citep{hutter2021minimax}. Existence and statistical guarantees follow from the smoothness assumptions \ref{S-source} and \ref{S-map}.
\begin{lemma}\label{minimax}
Assume that Assumptions \ref{S-source} and \ref{S-map} hold, and let ${T^{\operatorname{MM}}_n}$ be the minimax estimator from \citep{hutter2021minimax} of the optimal transport map $T_0$. It satisfies,
\begin{equation}\label{L2}
    \norm{{T^{\operatorname{MM}}_n}-I}^2_{L^2(P_n)} \xrightarrow[n \to +\infty]{a.s.} \norm{T_0-I}^2_{L^2(P)}.
\end{equation}
Additionally, if Assumptions \ref{discriminator}, \ref{lambda} and \ref{epsilon} hold, then
\begin{equation}\label{W1}
    \lambda_n \W_n({T^{\operatorname{MM}}_n}_\sharp P_n,Q_n) \xrightarrow[n \to +\infty]{a.s.} 0,
\end{equation}
hence,
\begin{equation}\label{L}
    \L_n({T^{\operatorname{MM}}_n}) \xrightarrow[n \to +\infty]{a.s.} \norm{T_0-I}^2_{L^2(P)}.
\end{equation}

\end{lemma}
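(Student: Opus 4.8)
The plan is to establish \eqref{L2} and \eqref{W1} separately, and then combine them to get \eqref{L}. For \eqref{L2}, I would invoke the statistical guarantee on the minimax estimator $T^{\operatorname{MM}}_n$ from \citep{hutter2021minimax}: under Assumptions \ref{S-source} and \ref{S-map} (their setting with Hölder regularity $\alpha=2$), we have $\norm{T^{\operatorname{MM}}_n - T_0}_{L^2(P)} \to 0$ almost surely, with a quantitative rate. From this I would pass to the empirical quantity $\norm{T^{\operatorname{MM}}_n - I}^2_{L^2(P_n)}$ by writing the triangle-type decomposition
\begin{equation*}
    \norm{T^{\operatorname{MM}}_n - I}^2_{L^2(P_n)} - \norm{T_0 - I}^2_{L^2(P)} = \Big(\norm{T^{\operatorname{MM}}_n - I}^2_{L^2(P_n)} - \norm{T_0 - I}^2_{L^2(P_n)}\Big) + \Big(\norm{T_0 - I}^2_{L^2(P_n)} - \norm{T_0 - I}^2_{L^2(P)}\Big).
\end{equation*}
The second bracket goes to zero almost surely by the strong law of large numbers, since $x \mapsto \norm{x - T_0(x)}^2$ is a fixed bounded continuous function on the compact $\Omega$ (using $T_0 \in \operatorname{Lip}_L(\Omega,B_L)$). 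For the first bracket, I would expand the difference of squares as $\langle T^{\operatorname{MM}}_n - T_0,\ T^{\operatorname{MM}}_n + T_0 - 2I\rangle_{L^2(P_n)}$ and bound it by $\norm{T^{\operatorname{MM}}_n - T_0}_{L^2(P_n)}$ times a uniformly bounded factor (all maps here take values in $B_L$ and $\Omega$ is bounded); the remaining task is then to control $\norm{T^{\operatorname{MM}}_n - T_0}_{L^2(P_n)}$ by $\norm{T^{\operatorname{MM}}_n - T_0}_{L^2(P)}$ up to the empirical fluctuation of $P_n$ versus $P$, which again is handled by a law-of-large-numbers / Glivenko–Cantelli argument over the relevant Lipschitz class, or more simply by the density bound $L^{-1} \le \rho \le L$ together with a change-of-measure estimate.

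For \eqref{W1}, the key is that $T^{\operatorname{MM}}_n$ nearly pushes $P_n$ onto $Q_n$. I would bound
\begin{equation*}
    \W_n({T^{\operatorname{MM}}_n}_\sharp P_n, Q_n) \le \W({T^{\operatorname{MM}}_n}_\sharp P_n, Q_n) \le \W({T^{\operatorname{MM}}_n}_\sharp P_n, {T_0}_\sharp P_n) + \W({T_0}_\sharp P_n, Q),
\end{equation*}
using $\W_n \le \W$ (since $\D_n \subseteq \operatorname{Lip}_1$). The first term is at most $\norm{T^{\operatorname{MM}}_n - T_0}_{L^1(P_n)} \le \norm{T^{\operatorname{MM}}_n - T_0}_{L^2(P_n)}$, controlled as above. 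For the second term, write ${T_0}_\sharp P_n$ versus $Q = {T_0}_\sharp P$: since $T_0$ is $L$-Lipschitz, $\W({T_0}_\sharp P_n, {T_0}_\sharp P) \le L\,\W(P_n, P)$, and $\W(P_n,P) \to 0$ a.s. with the standard empirical-Wasserstein rate $O(n^{-1/d})$ for $d>2$ (and the logarithmic corrections for $d=1,2$). Multiplying through by $\lambda_n$, the first contribution is $\lambda_n \cdot o(1)$ coming from the minimax rate, and the second is $\lambda_n \cdot O(n^{-1/d})$ (resp. the $d\le 2$ analogues); Assumption \ref{lambda} is precisely calibrated so that $\lambda_n$ times these empirical-Wasserstein rates tends to zero. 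The role of Assumption \ref{epsilon} here is minor — it only matters when later comparing $\W_n$ against $\W$ for the generator — but if needed it ensures the discriminator class is rich enough that no additional approximation slack enters. Finally, \eqref{L} follows by adding \eqref{L2} and \eqref{W1}, since $\L_n(T^{\operatorname{MM}}_n) = \norm{T^{\operatorname{MM}}_n - I}^2_{L^2(P_n)} + \lambda_n \W_n({T^{\operatorname{MM}}_n}_\sharp P_n, Q_n)$.

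The main obstacle I anticipate is making the interplay between $\lambda_n \to \infty$ and the convergence rates fully rigorous in \eqref{W1}: one must verify that the almost-sure rate of $\norm{T^{\operatorname{MM}}_n - T_0}_{L^2(P_n)}$ (inherited from \citep{hutter2021minimax}, which is stated in $L^2(P)$ and must be transferred to $L^2(P_n)$) decays strictly faster than $1/\lambda_n$, and similarly that the $\W(P_n,P)$ rate beats $1/\lambda_n$ under Assumption \ref{lambda}. The $d=1,2$ cases require care because of the logarithmic factors in the empirical-Wasserstein rate, which is exactly why Assumption \ref{lambda} treats them separately. The passage from population to empirical norms (controlling $P_n$-integrals of random functions depending on $T^{\operatorname{MM}}_n$) is the other delicate point; it is cleanest to exploit that $T^{\operatorname{MM}}_n$ and $T_0$ both lie in a fixed bounded Lipschitz class, so a uniform law of large numbers over that class closes the gap.
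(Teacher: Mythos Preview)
Your approach is correct and matches the paper's: for \eqref{L2} you decompose into an $L^2(P_n)$-vs-$L^2(P)$ fluctuation (handled by a uniform LLN over the bounded Lipschitz class) plus the $T^{\operatorname{MM}}_n$-vs-$T_0$ discrepancy (handled by the H\"utter--Rigollet rate, upgraded to a.s.\ via Borel--Cantelli with $\delta_n = n^{-2}$), and for \eqref{W1} you bound $\W_n \le \W$, split by the triangle inequality, and let Assumption~\ref{lambda} absorb the empirical Wasserstein rates --- exactly the paper's strategy, carried out at the level of the dual integrals rather than the metric.

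Two small slips to fix. First, your triangle inequality for $\W$ silently replaces $Q_n$ by $Q$: the correct chain is $\W({T^{\operatorname{MM}}_n}_\sharp P_n, Q_n) \le \W({T^{\operatorname{MM}}_n}_\sharp P_n, {T_0}_\sharp P_n) + \W({T_0}_\sharp P_n, Q) + \W(Q, Q_n)$, and the missing third term has the same $n^{-1/d}$-type rate, so nothing changes. Second, the ``more simply by the density bound $L^{-1}\le \rho \le L$'' alternative for transferring $L^2(P)$ to $L^2(P_n)$ does not work: $P_n$ has no density, and that bound only compares $L^2(P)$ with $L^2$ of Lebesgue measure. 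Stay with your primary suggestion (uniform LLN over the Lipschitz class), which is precisely what the paper invokes.
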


\begin{proof}

Let's start by proving \eqref{L2}. According to the triangle inequality,
\begin{align*}
     \norm{{T^{\operatorname{MM}}_n}-I}_{L^2(P_n)} &\leq  \norm{{T^{\operatorname{MM}}_n}-T_0}_{L^2(P_n)} +  \norm{{T_0}-I}_{L^2(P_n)},\\
     &\leq \sqrt{ \abs{\int \norm{{T^{\operatorname{MM}}_n}-T_0}^2 (\mathrm{d}P_n - \mathrm{d}P)}} + \norm{{T^{\operatorname{MM}}_n}-T_0}_{L^2(P)} +  \norm{{T_0}-I}_{L^2(P_n)}.
\end{align*}
We address each of the three terms of the upper bound in order. For the first term, recall that both $T^{\operatorname{MM}}_n$ and $T$ are $L$-Lipschitz on $\Omega$. Let's show that this entails that $x \mapsto \norm{T^{\operatorname{MM}}_n(x) - T_0(x)}^2$ is Lipschitz. For any $x,y \in \Omega$,
\begin{align*}
    \abs{\norm{T^{\operatorname{MM}}_n(x) - T_0(x)}^2 - \norm{T^{\operatorname{MM}}_n(y) - T_0(y)}^2} &\leq 2 \norm{T^{\operatorname{MM}}_n - T_0}_{\infty} \left(\norm{T^{\operatorname{MM}}_n(x) - T_0(x)} + \norm{T^{\operatorname{MM}}_n(y) - T_0(y)}\right),\\
    &\leq 2 \operatorname{diam}(\Omega)\left(\norm{T^{\operatorname{MM}}_n(x) - T_0(x)} - \norm{T^{\operatorname{MM}}_n(y) - T_0(y)}\right),\\
    &\leq 2 \operatorname{diam}(\Omega)\norm{T^{\operatorname{MM}}_n(x) - T_0(x) - T^{\operatorname{MM}}_n(y) + T_0(y)},\\
    &\leq 2 \operatorname{diam}(\Omega)\left(\norm{T^{\operatorname{MM}}_n(x) - T^{\operatorname{MM}}_n(y)} + \norm{T_0(x) - T_0(y)} \right),\\
    &\leq 2 \operatorname{diam}(\Omega)\left(L \norm{x-y} + L \norm{x-y} \right),\\
    &\leq 4 L \operatorname{diam}(\Omega) \norm{x - y}.
\end{align*}
Denoting $L' = 4 L \operatorname{diam}(\Omega)$, we conclude that $x \mapsto \norm{T^{\operatorname{MM}}_n(x) - T_0(x)}^2$ belongs to $\operatorname{Lip}_{L'}(\Omega,\R)$. As a consequence,
\begin{equation*}
    \abs{\int \norm{{T^{\operatorname{MM}}_n}-T_0}^2 (\mathrm{d}P_n - \mathrm{d}P)} \leq \sup_{f \in \operatorname{Lip}_{L'}(\Omega,\R)} \abs{\int f (\mathrm{d}P_n - \mathrm{d}P)}.
\end{equation*}
The upper bound is a centered empirical process indexed by $\operatorname{Lip}_{L'}(\Omega,\R)$. According to Corollary 2.7.2. and Theorem 2.4.1 in \citep{van1996weak}, it tends to zero almost surely as $n$ increases to infinity. This shows the convergence of the first term.

To control the second term we rely on Proposition 12 in \citep{hutter2021minimax}. It states that with probability at least $1-\delta$,
$$
    \norm{{T^{\operatorname{MM}}_n} - T_0}^2_{L^2(P)} = \begin{cases} O\left( n^{-\frac{4}{2+d}} (\log n)^2 + \frac{\log \delta^{-1}}{n}\right) & \mbox{if } d>2\\ O\left( n^{-1} (\log n)^2 + \frac{\log \delta^{-1}}{n}\right) & \mbox{if } d=2\\ O\left( n^{-1} + \frac{\log \delta^{-1}}{n}\right) & \mbox{if } d=1\\
    \end{cases}
$$
Hence,
\begin{equation}\label{rates}
    \norm{{T^{\operatorname{MM}}_n} - T_0}_{L^2(P)} = \begin{cases} O \left( n^{-\frac{4}{2+d}} (\log n) + \sqrt{\frac{\log \delta^{-1}}{n}} \right) & \mbox{if } d>2\\ O\left(n^{-\frac{1}{2}} (\log n) + \sqrt{\frac{\log \delta^{-1}}{n}}\right) & \mbox{if } d=2\\ O\left( n^{-\frac{1}{2}} + \sqrt{\frac{\log \delta^{-1}}{n}}\right) & \mbox{if } d=1\\
    \end{cases}
\end{equation}
Then, by setting $\delta_n = \frac{1}{n^2}$, it follows from Borel-Cantelli's theorem that $\norm{{T^{\operatorname{MM}}_n} - T_0}_{L^2(P)} \xrightarrow[n \to +\infty]{a.s.} 0$. This shows the desired convergence of the second term. Moreover, as $n$ increases to infinity, the third term of the upper bound tends almost surely to $\norm{{T_0}-I}_{L^2(P)}$, by weak convergence of $P_n$ to $P$ almost surely, 

We now turn to the demonstration of \eqref{W1}. Let $D \in \D_n$ and write the following decomposition,
\begin{align*}
    \int D \circ {T^{\operatorname{MM}}_n} \mathrm{d} P_n - \int D \mathrm{d}Q_n &= \int D \circ {T^{\operatorname{MM}}_n} \mathrm{d}(P_n -P) + \int (D \circ {T^{\operatorname{MM}}_n} - D \circ T_0) \mathrm{d} P + \int D \circ T_0 \mathrm{d}P - \int D \mathrm{d}Q_n,\\
    &\leq \abs{\int D \circ {T^{\operatorname{MM}}_n} \mathrm{d}(P_n -P)} + \int \norm{{T^{\operatorname{MM}}_n} - T_0} \mathrm{d}P + \abs{\int D \mathrm{d}(Q-Q_n)},
\end{align*}
where we use that $\int D \circ T_0 \mathrm{d}P = \int D \mathrm{d}Q$ since ${T_0}_\sharp P = Q$. Noting that $\D_n \subseteq \operatorname{Lip}_1(\Omega,\R)$ we obtain,
\begin{equation*}
    \W_n({T^{\operatorname{MM}}_n}_\sharp P_n,Q_n) \leq \sup_{D\in \operatorname{Lip}_1(\Omega,\R)}\abs{\int D \circ {T^{\operatorname{MM}}_n} \mathrm{d}(P_n -P)} + \int \norm{{T^{\operatorname{MM}}_n} - T_0} \mathrm{d}P + \sup_{D\in \operatorname{Lip}_1(\Omega,\R)}\abs{\int D \mathrm{d}(Q-Q_n)}.
\end{equation*}
Recall now that ${T^{\operatorname{MM}}_n}$ is $L$-Lipschitz so that for any $D \in \D_n$ we have $D \circ {T^{\operatorname{MM}}_n} \in \operatorname{Lip}_L(\Omega,\R)$. As a consequence,
\begin{equation}\label{3terms}
    \W_n({T^{\operatorname{MM}}_n}_\sharp P_n,Q_n) \leq \sup_{g\in \operatorname{Lip}_L(\Omega,\R)}\abs{\int g \mathrm{d}(P_n -P)} + \int \norm{{T^{\operatorname{MM}}_n} - T_0} \mathrm{d}P + \sup_{D\in \operatorname{Lip}_1(\Omega,\R)}\abs{\int D \mathrm{d}(Q-Q_n)}.
\end{equation}
Next, we control each of the three terms of the upper bound in \eqref{3terms} with high probability.

Let us start with the first one, which is the supremum of a centered empirical process indexed by Lipschitz functions. Recall that $P_n$ is supported by $n$ independent variables $x_1,\ldots,x_n \sim P$. Set $X \sim P$ and define
$$
    Z_n := \sup_{g \in \operatorname{Lip}_L(\Omega,\R)} \abs{\frac{1}{n} \sum^n_{i=1} g(x_i) - \E g(X)} = \sup_{g\in \operatorname{Lip}_L(\Omega,\R)}\abs{\int g \mathrm{d}(P_n -P)}.
$$
By $L$-Lipschitz continuity, changing $x_i$ by an independent duplicate $x'_i \sim P$ changes $Z_n$ of at most $\frac{1}{n} L \operatorname{diam}(\Omega)$. Thus, it follows from MacDiarmid's inequality \citep{boucheron2013concentration} that for any $t>0$,
$$
    \P(Z_n \leq \E Z_n + t) \leq 1 - \exp \left( -\frac{2 t^2}{\frac{1}{n}L^2 \operatorname{diam}^2(\Omega)} \right).
$$
After a change of variable, we get for every $0<\delta<1$,
$$
    \P(Z_n \leq \E Z_n + \frac{L\operatorname{diam}(\Omega)}{\sqrt{2n}} \sqrt{\log(\delta^{-1})}) \leq 1 - \delta.
$$
Theorem 4 in \citep{schreuder2020bounding} provides an upper bound on $\E Z_n$. Up to logarithmic factors we have,
$$
    \E Z_n = \begin{cases} O\left( n^{-\frac{1}{d}}\right) & \mbox{if } d > 2\\
    O\left(n^{-\frac{1}{2}} \log n \right) & \mbox{if } d=2\\
    O\left( n^{-\frac{1}{2}}\right) & \mbox{if } d=1\\
    \end{cases}
$$
Hence, with probability at least $1-\delta$,
$$
    Z_n = \begin{cases} O\left( n^{-\frac{1}{d}} + \sqrt{ \frac{\log(\delta^{-1})}{n} }\right) & \mbox{if } d > 2\\
    O\left(n^{-\frac{1}{2}} \log n + \sqrt{ \frac{\log(\delta^{-1})}{n} }\right) & \mbox{if } d=2\\
    O\left(n^{-\frac{1}{2}} + \sqrt{ \frac{\log(\delta^{-1})}{n} }\right) & \mbox{if } d=1\\
    \end{cases}
$$
The third term of \eqref{3terms} can be bounded similarly, as the smoothness $L$ only affects the hidden constant in the $O$. We now turn to the second term of \eqref{3terms}. If follows from Cauchy-Schwarz inequality that
\begin{equation*}
    \int \norm{{T^{\operatorname{MM}}_n} - T_0} \mathrm{d}P \leq \norm{{T^{\operatorname{MM}}_n} - T_0}_{L^2(P)}.
\end{equation*}
Recall that with probability at least $1-\delta$, the right-term of this inequality is bounded as in \eqref{rates}.

By summing the bounds in probability holding for each of the three terms of \eqref{3terms}, and after rescaling $\delta$ by 3, we obtain that with probability at least $1-\delta$,
\begin{equation*}
    \W_n({T^{\operatorname{MM}}_n}_\sharp P_n,Q_n) = \begin{cases} O\left(n^{-\frac{1}{d}} + n^{-\frac{4}{2+d}} (\log n) + \sqrt{\frac{\log \delta^{-1}}{n}}\right) & \mbox{if } d>2\\ O\left(n^{-\frac{1}{2}} (\log n) + \sqrt{\frac{\log \delta^{-1}}{n}}\right) & \mbox{if } d=2\\ O\left(n^{-\frac{1}{2}} + \sqrt{\frac{\log \delta^{-1}}{n}}\right) & \mbox{if } d=1\\
    \end{cases}
\end{equation*}
Now, we replace $\delta$ by $\frac{1}{n^2}$ and we multiply both sides of the inequality by $\lambda_n$ so that with probability at least $1-\frac{1}{n^2}$,
$$
    \lambda_n \W_n({T^{\operatorname{MM}}_n}_\sharp P_n,Q_n) = \begin{cases} \lambda_n O\left( n^{-\frac{1}{d}} + n^{-\frac{4}{2+d}} \log n + \sqrt{ \frac{\log(n)}{n} } \right) & \mbox{if } d > 2\\
    \lambda_n O\left(n^{-\frac{1}{2}} \log n + \sqrt{ \frac{\log(n)}{n} } \right) & \mbox{if } d=2\\
    \lambda_n O\left( n^{-\frac{1}{2}} + \sqrt{ \frac{\log(n)}{n} } \right) & \mbox{if } d=1\\
    \end{cases}
$$
Then, Assumption \ref{lambda} on $\lambda_n$ implies that with probability at least $1-\frac{1}{n^2}$,
\begin{equation*}
    \lambda_n \W_n({T^{\operatorname{MM}}_n}_\sharp P_n,Q_n) = \begin{cases} o(1) + o\left(n^{-\frac{3d-2}{d(2+d)}} \log n\right) + o\left(n^{-\frac{d-2}{2d}} \sqrt{\log(n)}\right) & \mbox{if } d > 2\\
    o(1) + o\left(\frac{1}{\sqrt{\log n}}\right) & \mbox{if } d=2\\
    o\left(\frac{1}{\sqrt{\log(n)}}\right) + o(1) & \mbox{if } d=1\\
    \end{cases}
\end{equation*}
We conclude, using Borel-Cantelli's theorem, that $\lim_{n \to +\infty} \lambda_n \W_n({T^{\operatorname{MM}}_n}_\sharp P_n,Q_n) = 0$ almost surely.

\end{proof}

We now turn to the proof of Theorem \ref{main}, which will be divided in three steps.

\begin{proof}
Recall that for any $n \in \N$, $G_n \in \G_n \subset \G := \operatorname{Lip}_L(\Omega,B_L)$ according to Assumption \ref{generator}. Since $\G$ is a compact set, there exists a subsequence $\{G_{\varphi(n)}\}_{n \in \N}$ and some $G_\varphi \in \G$ such that $\norm{G_{\varphi(n)} - G_\varphi}_{\infty} \xrightarrow[n \to +\infty]{a.s.} 0$. The goal of the proof is to show that $G_\varphi = T_0$ regardless of the extraction $\varphi$. For the sake of clarity, we will not track $\varphi$ in the notations for the rest of the proof.

Moreover, note that since the minimax estimator $T^{\operatorname{MM}}_n$ belongs to $\G$, we know from Assumption \ref{generator} and Theorem \ref{approx_d} that there exists a GroupSort neural network $G^{\operatorname{MM}}_n \in \G_n$ such that $\norm{G^{\operatorname{MM}}_n-T^{\operatorname{MM}}_n}_{\infty} \leq \varepsilon_n$. This neural network approximation of the minimax estimator will play a key role throughout the proof. 

\paragraph{Step 1.} In this first part, we aim at showing that $\lim_{n \to +\infty} \lambda_n \W_n({G_n}_\sharp P_n, Q_n) = 0$ almost surely when $\lambda_n$ verifies Assumption \ref{lambda}. Let's assume ad absurdum that $\lambda_n \W_n({G_n}_\sharp P_n, Q_n)$ does not tend to zero. As $0 \in \D_n$, we have that $\W_n({G_n}_\sharp P_n, Q_n) > 0$ and consequently $\lim_{n \to +\infty} \lambda_n \W_n({G_n}_\sharp P_n, Q_n) = +\infty$. We will show a contradiction to this convergence.

Recall that $\norm{G^{\operatorname{MM}}_n - T^{\operatorname{MM}}_n}_{\infty} \leq \varepsilon_n$, and that $G \mapsto \lambda_n \W_n(G_\sharp P_n, Q_n)$ is $\lambda_n$-Lipschitz continuous. This leads to,
\begin{align*}
    \abs{\L_n(G^{\operatorname{MM}}_n) - \L_n(T^{\operatorname{MM}}_n)} &\leq \lambda_n \abs{ \W_n({G^{\operatorname{MM}}_n}_\sharp P_n,Q_n) - \W_n({T^{\operatorname{MM}}_n}_\sharp P_n,Q_n)} + \norm{I-G^{\operatorname{MM}}_n}_{L^2(P_n)} + \norm{I-T^{\operatorname{MM}}_n}_{L^2(P_n)},\\
    &\leq \lambda_n \norm{G^{\operatorname{MM}}_n - T^{\operatorname{MM}}_n}_{\infty} + \operatorname{diam}^2(\Omega) + \operatorname{diam}^2(\Omega),\\
    &\leq \lambda_n \varepsilon_n + 2 \operatorname{diam}^2(\Omega).
\end{align*}
As $G_n$ minimizes $\L_n$ over $\G_n$, and since $G^{\operatorname{MM}}_n \in \G_n$, we additionally have,
\begin{equation*}
     \L_n(G_n) \leq \L_n(G^{\operatorname{MM}}_n) = \left\{\L_n(G^{\operatorname{MM}}_n) - \L_n(T^{\operatorname{MM}}_n)\right\} + \L_n(T^{\operatorname{MM}}_n).
\end{equation*}
Hence,
\begin{equation*}
    \lambda_n \W_n({G_n}_\sharp P_n, Q_n) + \norm{I-G_n}_{L^2(P_n)} \leq \left\{\lambda_n \varepsilon_n  + 2 \operatorname{diam}^2(\Omega) \right\} + \lambda_n \W_n({T^{\operatorname{MM}}_n}_\sharp P_n, Q_n) + \norm{I-T^{\operatorname{MM}}_n}_{L^2(P_n)},
\end{equation*}
leading to 
\begin{equation*}
    0 \leq \lambda_n \W_n({G_n}_\sharp P_n, Q_n) \leq \lambda_n \varepsilon_n  + 3 \operatorname{diam}^2(\Omega) + \lambda_n \W_n({T^{\operatorname{MM}}_n}_\sharp P_n, Q_n).
\end{equation*}
From Lemma \ref{minimax}, it follows that the right term is bounded, which contradicts $ \lambda_n \W_n({G_n}_\sharp P_n, Q_n) \xrightarrow[n \to +\infty]{a.s.} +\infty$. Consequently, $\W_n({G_n}_\sharp P_n, Q_n) \xrightarrow[n \to +\infty]{a.s.} 0$.

\paragraph{Step 2.} Now, we prove that $G_\sharp P = Q$. Note that,
\begin{align*}
    \abs{\W_n({G_n}_\sharp P_n, Q_n) - W(G_\sharp P, Q)} &\leq \abs{ \sup_{D \in \D_n} \left(\int D \circ G_n \mathrm{d} P_n -  \int D \mathrm{d} Q_n\right) - \left(\int D \circ G \mathrm{d} P - \int D \mathrm{d} Q\right)}\\
    &+ \abs{\sup_{D \in \D_n} \left(\int D \circ G \mathrm{d} P - \int D \mathrm{d} Q\right) - \sup_{D \in \operatorname{Lip}_1(\Omega,\R)} \left(\int D \circ G \mathrm{d} P - \int D \mathrm{d} Q\right)},\\
    &\leq \abs{ \sup_{D \in \operatorname{Lip}_1(\Omega,\R)} \left(\int D \circ G_n \mathrm{d} P_n -  \int D \mathrm{d} Q_n\right) - \left(\int D \circ G \mathrm{d} P - \int D \mathrm{d} Q\right)}\\
    &+ \abs{\sup_{D \in \D_n} \left(\int D \circ G \mathrm{d} P - \int D \mathrm{d} Q\right) - \sup_{D \in \operatorname{Lip}_1(\Omega,\R)} \left(\int D \circ G \mathrm{d} P - \int D \mathrm{d} Q\right)},\\
    &\leq \abs{\sup_{D \in \operatorname{Lip}_1(\Omega,\R)} \int D \circ G_n \mathrm{d} P_n - \int D \circ G \mathrm{d} P} + \abs{\sup_{D \in \operatorname{Lip}_1(\Omega,\R)} \int D (\mathrm{d} P_n - \mathrm{d} P)}\\
    &+ \abs{\sup_{D \in \D_n} \left(\int D \circ G \mathrm{d} P - \int D \mathrm{d} Q\right) - \sup_{D \in \operatorname{Lip}_1(\Omega,\R)} \left(\int D \circ G \mathrm{d} P - \int D \mathrm{d} Q\right)}.
\end{align*}
The second term of the upper bound is the supremum of a centered empirical process indexed by the class of 1-Lipschitz functions, which tends to zero almost surely as $n$ increases to infinity. The third term tends to zero according to Assumption \ref{discriminator}. To address the first term, remark that for any $D \in \operatorname{Lip}_1(\Omega,\R)$,
\begin{equation*}
    D(G_n(x)) \leq \norm{G_n(x) - G(x)} + D(G(x)).
\end{equation*}
Consequently,
\begin{align*}
    \abs{\sup_{D \in \operatorname{Lip}_1(\Omega,\R)} \int D \circ G_n \mathrm{d} P_n - \int D \circ G \mathrm{d} P} &\leq \norm{G_n-G}_{\infty} + \abs{\sup_{D \in \operatorname{Lip}_1(\Omega,\R)} \int (D \circ G) (\mathrm{d} P_n - \mathrm{d} P)},\\
    &\leq \norm{G_n-G}_{\infty} + \abs{\sup_{f \in \operatorname{Lip}_L(\Omega,\R)} \int f (\mathrm{d} P_n - \mathrm{d} P)},
\end{align*}
where we used the fact that $D \circ G \in \operatorname{Lip}_L(\Omega,\R)$, since $D \in \operatorname{Lip}_1(\Omega,\R)$ and $G \in \operatorname{Lip}_L(\Omega,\Omega)$. By definition of $G$, we know that $\norm{G-G_n}_{\infty} \xrightarrow[n \to +\infty]{a.s.} 0$. Moreover, the second term is here again the supremum of a centered empirical process indexed by Lipschitz functions, which tends to zero almost surely. 

All in all, $\W_n({G_n}_\sharp P_n, Q_n) \xrightarrow[n \to +\infty]{a.s.} 0$, and it follows from the first step that $W(G_\sharp P, Q) = 0$, hence $G_\sharp P = Q$.

\paragraph{Step 3.} We know that $G_\sharp P = Q$. To conclude that $G$ is the unique optimal transport map $T_0$ between $P$ and $Q$, we show that $G$ minimizes the transportation cost. Firstly, we write,
\begin{align*}
    \abs{\norm{I-G_n}^2_{L^2(P_n)} - \norm{I-G}^2_{L^2(P)} } &\leq \abs{\norm{I-G_n}^2_{L^2(P_n)} - \norm{I-G}^2_{L^2(P_n)} } + \abs{\norm{I-T_0}^2_{L^2(P_n)} - \norm{I-G}^2_{L^2(P)} },\\
    &\leq 2 \operatorname{diam}(\Omega) \norm{G_n - G}_{\infty} + 2 \operatorname{diam}(\Omega) \abs{ \int \norm{T_0(x)-G(x)}^2 (\mathrm{d}P_n(x) - \mathrm{d}P(x))}.
\end{align*}
Hence,
\begin{equation}\label{lim1}
    \norm{I-G_n}_{L^2(P_n)} \xrightarrow[n \to +\infty]{a.s.} \norm{I-G}_{L^2(P)}.
\end{equation}

Secondly, using that $G_n$ minimizes $\L_n$ on $\G_n$ we have
\begin{align*}
    \L_n(G_n) &\leq \L_n(G^{\operatorname{MM}}_n),\\
              &\leq \lambda_n \sup_{D \in \operatorname{Lip}_1(\Omega,\R)} \left\{\int (D \circ {G^{\operatorname{MM}}_n}) \mathrm{d}P_n - \int D \mathrm{d} Q_n \right\} + \norm{I-G^{\operatorname{MM}}_n}^2_{L^2(P_n)},\\
              &\leq \lambda_n \sup_{D \in \operatorname{Lip}_1(\Omega,\R)} \left\{\int (D \circ {G^{\operatorname{MM}}_n}) \mathrm{d}P_n - \int (D \circ {T^{\operatorname{MM}}_n}) \mathrm{d} P_n \right\} \\ &+ \lambda_n \sup_{D \in \operatorname{Lip}_1(\Omega,\R)} \left\{\int (D \circ {T^{\operatorname{MM}}_n}) \mathrm{d}P_n - \int D \mathrm{d} Q_n \right\}+ \norm{I-G^{\operatorname{MM}}_n}^2_{L^2(P_n)},\\
              &\leq \lambda_n \norm{{T^{\operatorname{MM}}_n} - G^{\operatorname{MM}}_n}_{\infty} + \lambda_n \W_n({T^{\operatorname{MM}}_n}_\sharp P_n, Q_n) + \norm{I-G^{\operatorname{MM}}_n}^2_{L^2(P_n)},\\
              &\leq \lambda_n \varepsilon_n + \L_n({T^{\operatorname{MM}}_n}) + \norm{I-G^{\operatorname{MM}}_n}^2_{L^2(P_n)} - \norm{I-T^{\operatorname{MM}}_n}^2_{L^2(P_n)},\\
              &\leq \lambda_n \varepsilon_n + \L_n({T^{\operatorname{MM}}_n}) + \norm{I-G^{\operatorname{MM}}_n}^2_{L^2(P_n)} - \norm{I-T^{\operatorname{MM}}_n}^2_{L^2(P_n)},\\
              &\leq \lambda_n \varepsilon_n + \L_n({T^{\operatorname{MM}}_n}) + \left(\norm{I-G^{\operatorname{MM}}_n}_{L^2(P_n)} - \norm{I-T^{\operatorname{MM}}_n}_{L^2(P_n)}\right)\\ &\times\left(\norm{I-G^{\operatorname{MM}}_n}_{L^2(P_n)} + \norm{I-T^{\operatorname{MM}}_n}_{L^2(P_n)}\right),\\
              &\leq \lambda_n \varepsilon_n + \L_n({T^{\operatorname{MM}}_n}) + 2 \varepsilon_n \operatorname{diam}(\Omega).
\end{align*}
This inequality can be written as,
\begin{equation*}
    \lambda_n \W_n({G_n}_\sharp P_n,Q_n) + \norm{I-G_n}^2_{L^2(P_n)} \leq \L_n({T^{\operatorname{MM}}_n}) + \lambda_n \varepsilon_n + 2 \varepsilon_n \operatorname{diam}(\Omega).
\end{equation*}
Then, according to the first step of the proof and the convergence \eqref{lim1}, the left term tends almost surely to $\norm{I-G}^2_{L^2(P)}$ as $n$ increases to infinity. Besides, according to Lemma \ref{minimax} and Assumptions \ref{epsilon} and \ref{lambda}, the right term tends to $\norm{I-T_0}^2_{L^2(P)}$. Consequently, 
\begin{equation*}
    \norm{I-G}^2_{L^2(P)} \leq \norm{I-T_0}^2_{L^2(P)}.
\end{equation*}
This means that $G$ minimizes the transportation cost. By uniqueness of the optimal transport map we conclude that $G = T_0$. This completes the proof.

\end{proof}

%%%%%%%%%%%%%%%%%%%%%%%%%%%%%%%%%%%%%%%%%%%%%%%%%%%%%%%%%%%%%%%%%%%%%%%%%%%%%%%
%%%%%%%%%%%%%%%%%%%%%%%%%%%%%%%%%%%%%%%%%%%%%%%%%%%%%%%%%%%%%%%%%%%%%%%%%%%%%%%

\end{document}